\newtheorem{theorem}{Theorem}[section]
\newtheorem{proposition}[theorem]{Proposition}
\newtheorem{lemma}[theorem]{Lemma}
\theoremstyle{definition}
\newtheorem{assumption}[theorem]{Assumption}
\theoremstyle{remark}
\newcommand{\argmin}[1]{\underset{#1}{\operatorname{arg}\,\operatorname{min}}\;}
\let\citet\cite
\title{Quantifying Epistemic Uncertainty in Deep Learning
}
\author{%
  Ziyi Huang, Henry Lam, Haofeng Zhang
  \\
  Columbia University\\
  New York, NY 10027 \\
  \texttt{zh2354,khl2114,hz2553@columbia.edu} \\
  % examples of more authors
  % \And
  % Coauthor \\
  % Affiliation \\
  % Address \\
  % \texttt{email} \\
  % \AND
  % Coauthor \\
  % Affiliation \\
  % Address \\
  % \texttt{email} \\
  % \And
  % Coauthor \\
  % Affiliation \\
  % Address \\
  % \texttt{email} \\
  % \And
  % Coauthor \\
  % Affiliation \\
  % Address \\
  % \texttt{email} \\
}
\begin{document}

\maketitle

\begin{abstract}
Uncertainty quantification is at the core of the reliability and robustness of machine learning. In this paper, we provide a theoretical framework to dissect the uncertainty, especially the \textit{epistemic} component, in deep learning into \textit{procedural variability} (from the training procedure) and \textit{data variability} (from the training data), which is the first such attempt in the literature to our best knowledge. We then propose two approaches to estimate these uncertainties, one based on influence function and one on batching. We demonstrate how our approaches overcome the computational difficulties in applying classical statistical methods. Experimental evaluations on multiple problem settings corroborate our theory and illustrate how our framework and estimation can provide direct guidance on modeling and data collection efforts.
\end{abstract}

\textbf{Update 06/2023:} This work, originally posted on arXiv in 10/2021, is a preliminary draft. Our more recent work \citep{huang2023efficient} is a substantially updated version. \citep{huang2023efficient} inherits and further develops the underlying idea from this work, and moreover, resolves some limitations in this work. For instance, we assume constant procedural bias in Theorem \ref{thm: decomp} and an uncorrelation property of procedural and data   variability in Section \ref{sec: batch}, which appears too restrictive in practice. \citep{huang2023efficient} avoids making such assumptions by establishing a framework based on the NTK theory completely.
    
\section{Introduction}
\vspace{-0.5em} 
Uncertainty quantification is of major importance in machine learning, especially for risk-based decision making and safety-critical applications \citep{michelmore2018evaluating}. It consists of two different components, often referred to as \emph{aleatoric} and \emph{epistemic} uncertainties. Aleatoric uncertainty refers to the intrinsic stochasticity of the problem while epistemic uncertainty refers to the errors coming from the inadequacy of the model or data noises. Measuring epistemic uncertainty is of importance in practice as it assesses the reliability of the predictive model, bearing the following immediate implications:
\begin{enumerate}[leftmargin=*]
\item\emph{Model selection:} Between models with similar predictive power, one with higher epistemic uncertainty is less favorable. That is, information on epistemic uncertainty gives a second dimension to predictive power for model selection. 
% than a model with less epistemic uncertainty. 
% This can be generalized to graphing a tradeoff between prediction accuracy and reliability.
\item\emph{Effort planning: }A quantification of different sources of epistemic uncertainty guides which aspect of the uncertainty needs more attention and improvement. For instance, if more uncertainty appears due to the training noise than the data noise, then one should put more effort into training than collecting more data, and vice versa.
% the lack of , , informative guidance on modeling and data collection can be provided to improve the training process. 
\item\emph{Exploration guidance: }Epistemic uncertainty gives rise to upper confidence bound that has been widely used to tackle the exploration-exploitation dilemma in online learning.
\end{enumerate}
Existing work on uncertainty measurement in deep learning models mainly focuses on \textit{prediction sets} (\textit{predictive uncertainty}), which captures the sum of epistemic and aleatoric uncertainties \citep{pearce2018high,romano2019conformalized,barber2019predictive,alaa2020frequentist,chen2021learning}. Despite some high-level discussions \citep{pearce2018high,hullermeier2021aleatoric}, few studies have been done in understanding epistemic uncertainty at a deeper level, namely differentiating and quantifying its different sources. 

%Beyond predictive uncertainty, quantifying epistemic uncertainty alone can be of independent interest in practice. Epistemic uncertainty assesses the reliability of the predictive model. In addition, epistemic uncertainty (e.g., upper confidence bound) can be used to tackle the exploration-exploitation dilemma in online learning. Moreover, further quantifying different sources of epistemic uncertainty separately can provide direct guidance on modeling and data collection to improve training. %One of our current work aims to quantify epistemic uncertainty in a deep learning model, or more generally, quantify procedural uncertainty and data uncertainty separately.

% In this paper, we consider a theoretical framework to systematically lay out the sources of uncertainty in a general prediction model. We quantify each source of uncertainty statistically, exemplify the implications, and demonstrate how to estimate these uncertainties in deep supervised learning. %Among those, we separately describe the uncertainty in the training procedure and the uncertainty in the training data. 

In this work, we fill in this gap by providing the first systematic study on the epistemic uncertainty in deep supervised learning. Epistemic uncertainty can be subdivided into three main sources: \emph{Model uncertainty} signifies the discrepancy
between the best model in the hypothesis class and the real data distribution, \emph{procedural variability} refers to the noise stemming from the training procedure, and \emph{data variability} arises from the uncertainty over how well the training data represents the real distribution. Our study in particular provides arguably the first dichotomy of procedural and data variabilities in deep learning. As an illustration of the strength of this dichotomy, we apply it to explain the benefit of deep ensemble in reducing procedural rather than data variabilities, and demonstrate how this benefit scales with the ensemble size.
% : A large procedural variability suggests the need to increase ensemble size, while a large data variability suggests the need to collect more data. This guidance can be applied one-off statically, but also in online settings in connection to more efficient exploration.
%to explain how deep ensemble achieves more accurate prediction by \emph{reducing procedural variability}, functioning in a sense similar to the conditional Monte Carlo method.

To execute our framework, we propose two approaches to estimate epistemic variances for well-trained neural networks in practice, which overcome some statistical and computational challenges faced by naive use of classical inference methods such as the bootstrap and jackknife. The first approach uses the delta method that involves computing the influence function, the latter viewed as the (functional) gradient of the predictor with respect to the data distribution. Our novelty lies in the derivation of this function using the theory of neural tangent kernel (NTK) \citep{jacot2018neural} that bypasses the convexity assumption violated by modern neural networks \citep{koh2017understanding,basu2020influence}. The second approach uses the batching method that groups data into a small number of batches upon which re-training applies. This approach bypasses the time-consuming Gram matrix inversion in the influence function calculation, while advantageously expending a minimal re-training effort -- In the case of deep ensemble, this effort is essentially the same as building the ensemble. While batching is long known in simulation analysis \citep{Glynn1990simulation,Schmeiser1982batch}, its application and discovered advantages in deep learning are new to our best knowledge. 

Our main contributions can be summarized as follows:\\
(1) We propose the first theoretical framework to dissect epistemic uncertainty in deep learning and give concrete justification and estimation methods for each component.\\
(2) We develop two approaches with theoretical justification to estimate the epistemic variances for deep neural networks, overcoming the statistical and computational challenges faced by classical statistical methods.\\
(3) We conduct comprehensive experiments to show the correctness of our theory and illustrate the practical use of our approaches on model training.

% , albeit at the expense of a looser estimate
% to overcome the difficulty of applying classical statistical approaches to deep learning. 
% (2) Based on NTK, we propose an influence function method for epistemic variance estimation, which bypasses the convexity assumption that is violated by modern neural networks \citep{koh2017understanding,basu2020influence}. (3) We propose an alternative batching method \citep{glynn2018constructing}, which can bypass the time-consuming Gram matrix inversion in the influence function calculation. %(4) We point out that two methods, classical statistical subsampling methods (e.g. bagging and Jackknife) and deep ensemble, are designed to estimate different types of epistemic uncertainty.

\textbf{Related Work.} Ronald A. Fisher \citep{fisher1930inverse} was probably the first one to formally distinguish and build a connection between aleatory and epistemic uncertainty, according to Frank R. Hampel \citep{hampel2011potential}. In modern machine learning, their distinction had been proposed in \citet{senge2014reliable} and then extended to deep learning \citep{kendall2017uncertainties}. The distinction between procedural and data variabilities in epistemic uncertainty was intuitively described in \citet{pearce2018high}, but no estimation method or rigorous definition was provided. We also remark that our work adopts the frequentist view, which is different from Bayesian uncertainty quantification  \citep{kendall2017uncertainties,gal2016dropout,he2020bayesian}. Further related work will be described in detail in the relevant subsections.

% no estimation method is given. To the best of our knowledge, those two kinds of variability has 
%\textbf{Subsampling Methods.}

%\textbf{Ensemble Methods in Deep Learning.}

%\textbf{Influence Functions and Infinitesimal Jackknife.}

%\textbf{Prediction Intervals.}

 \vspace{-0.5em} 
\section{Statistical Framework of Uncertainty}
 \vspace{-0.5em} 
In this section, we first review the standard statistical learning framework and then provide rigorous definitions for each type of uncertainty. %rigorously define each type of uncertainty. %

In supervised learning, we assume that the input-output pair $(X,Y)$ is a random vector where $X\in  \mathcal{X}\subset \mathbb{R}^d$ is an input and $Y\in  \mathcal{Y}$ is the corresponding output. The random vector follows some unknown probability measure $\pi$ on $\mathcal{X} \times \mathcal{Y}$. A learner is given access to a set of training data $\mathcal{D}_{tr}=\{(x_1,y_1),(x_2,y_2),...,(x_n,y_n)\}$, which is assumed to be independent and identically distributed (i.i.d.) according to the distribution $\pi$. Let the conditional distribution of $Y$ given $X$ be $\pi_{Y|X}(y|x)$. Let $\bm{x}=(x_1,...,x_n)^T$ and $\bm{y}=(y_1,...,y_n)^T$ for short. Let $\mathcal{H}$ denote a hypothesis class $\mathcal{H}=\{h_\theta: \mathcal{X} \to \mathcal{Y} | \theta \in \Theta\}$ where $\theta$ is the parameter (e.g., collection of network weights in deep learning) and $\Theta$ is the set of all possible parameters. Let $\mathcal{L}: \mathcal{Y} \times \mathcal{Y} \to \mathbb{R}$ be the loss function. For instance, $\mathcal{L}$ can be the square error in regression or cross-entropy loss in classification. The goal of the learner is to find a
hypothesis $h_{\theta^*} \in \mathcal{H}$ that minimizes the following (population) risk function:
$R(h):=\mathbb{E}_{(X,Y)\sim \pi} [\mathcal{L}(h(X),Y)].$
Let the true risk minimizer be
\begin{equation} \label{equ:trueminimizer}
h_{\theta^*}:= \argmin {\theta\in \Theta} R(h_\theta).    
\end{equation}
In practice, the population risk function is unknown (since $\pi$ is unknown) so an empirical risk function is derived based on the training data $\mathcal{D}_{tr}$:
$\hat{R}(h):= \mathbb{E}_{(X,Y)\sim \hat{\pi}_{\mathcal{D}_{tr}}} [\mathcal{L}(h(X),Y)]=\frac{1}{n} \sum_{i=1}^{n} \mathcal{L}(h(x_i),y_i)$
where $\hat{\pi}_{\mathcal{D}_{tr}}$ is the empirical distribution on the training data. 
% This is the performance of a hypothesis with respect to the training data. Then the learner will find a good hypothesis $h_\theta$ by minimizing the empirical risk:
The learner aims to obtain
\begin{equation} \label{equ:empiricalminimizer}
h_{\hat{\theta}^*}:= \argmin {\theta\in \Theta} \hat{R}(h_\theta)
\end{equation}
which is called empirical risk minimizer. In general, $h_{\theta^*}$ is not equal to $h_{\hat{\theta}^*}$. Note that in deep learning, $\hat{\theta}^*$ cannot be obtained precisely because of the non-convex nature of neural networks. Therefore, we usually approximate $\hat{\theta}^*$ by $\hat{\theta}$ using gradient-based methods. We remark that $h_{\theta^*}$ is the best choice (in the sense of minimizing the risk) within the hypothesis set $\mathcal{H}$. %, depending on the choice of $\mathcal{H}$. 
If any possible functions can be selected, the best predictions (in the sense of minimizing the risk) are
described by the \textit{Bayes} predictor $f^*$ \citep{hullermeier2021aleatoric}:
\begin{equation} \label{equ:Bayes}
f^*(X):=\argmin {\hat{y}\in\mathcal{Y}} \mathbb{E}_{Y \sim  \pi_{Y|X}} [\mathcal{L}(\hat{y}, Y)|X].
\end{equation}
$f^*$ cannot be obtained in practice, since the conditional distribution $\pi_{Y|X}(y|x)$ is unknown.

 \vspace{-0.5em} 
\subsection{Epistemic Uncertainty}
\vspace{-0.5em} 
\begin{wrapfigure}{r}{0.4\textwidth}
\footnotesize
    \centering
    \includegraphics[width=0.4\textwidth]{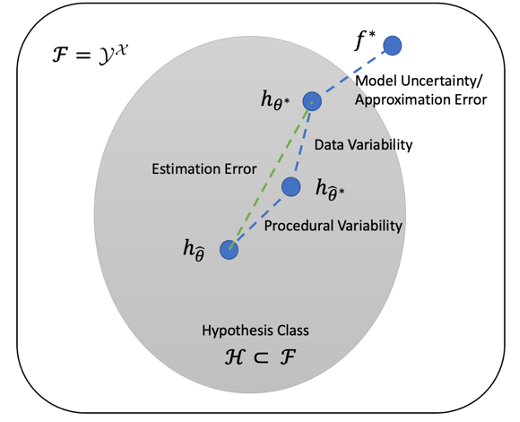}
    \caption{Three sources in epistemic uncertainty.}
    \label{myfigure}
\end{wrapfigure}  

We dissect epistemic uncertainty in deep supervised learning into three sources: model uncertainty, procedural variability, and data variability.

\textbf{Model Uncertainty.} Model uncertainty is the discrepancy between $f^*$ \eqref{equ:Bayes} and $h_{\theta^*}$ \eqref{equ:trueminimizer} which is connected to the inadequacy regarding the choice of the hypothesis class $\mathcal{H}$:
$\text{UQ}_{MU}=h_{\theta^*}-f^*.$
This error is sometimes referred to as the approximation error \citep{mohri2018foundations}. For over-parameterized neural networks where $\mathcal{H}$ is substantially rich, model uncertainty is usually negligible because of the impressive representation power of neural networks. For instance, a sufficiently wide network can universally approximate any continuous functions \citep{cybenko1989approximation,hornik1991approximation,hanin2017approximating} or any Lebesgue-integrable functions \citep{lu2017expressive}.
%Therefore, we treat model uncertainty as $0$ in deep learning. %as previous work; See \cite{pearce2018high}. More precisely, we have
\begin{proposition} \label{thm:modeluncertainty}
Suppose $f^*\in \mathcal{H}$. Then $\text{UQ}_{MU}=0$.
\end{proposition}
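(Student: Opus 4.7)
The plan is to use the standard fact that the Bayes predictor minimizes the population risk pointwise, hence globally over all measurable predictors, and then argue that if $f^*$ lies in $\mathcal{H}$ it must also be a risk minimizer inside $\mathcal{H}$, so $h_{\theta^*}$ and $f^*$ coincide (in the appropriate sense).

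First I would expand the population risk via the tower property:
\begin{equation*}
R(h_\theta) = \mathbb{E}_{X}\bigl[\mathbb{E}_{Y \sim \pi_{Y|X}}[\mathcal{L}(h_\theta(X),Y)\mid X]\bigr].
\end{equation*}
By the defining property \eqref{equ:Bayes} of the Bayes predictor, for every fixed $x$ the value $f^*(x)$ minimizes $\mathbb{E}_{Y \sim \pi_{Y|X}}[\mathcal{L}(\hat y, Y)\mid X=x]$ over $\hat y \in \mathcal{Y}$. Therefore, for any measurable function $g:\mathcal{X}\to\mathcal{Y}$ we have the pointwise inequality
\begin{equation*}
\mathbb{E}_{Y \sim \pi_{Y|X}}[\mathcal{L}(f^*(X),Y)\mid X] \;\le\; \mathbb{E}_{Y \sim \pi_{Y|X}}[\mathcal{L}(g(X),Y)\mid X]
\end{equation*}
$\pi_X$-almost surely, and taking expectation over $X$ yields $R(f^*) \le R(g)$. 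In particular $R(f^*) \le R(h_\theta)$ for every $\theta \in \Theta$.

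Next I would use the hypothesis $f^* \in \mathcal{H}$: there exists some $\theta_0 \in \Theta$ with $h_{\theta_0} = f^*$. Combining $R(f^*) \le R(h_{\theta^*})$ with the definition \eqref{equ:trueminimizer} of $h_{\theta^*}$, which gives $R(h_{\theta^*}) \le R(h_{\theta_0}) = R(f^*)$, forces $R(h_{\theta^*}) = R(f^*)$. Hence $h_{\theta^*}$ is itself a global minimizer of the conditional risk, so $h_{\theta^*}(X) = f^*(X)$ $\pi_X$-almost surely, and $\text{UQ}_{MU} = h_{\theta^*} - f^* = 0$.

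The only real subtlety, and the main thing to address carefully, is the uniqueness of the minimizer: in general there could be multiple $\theta$'s achieving the minimum, or $f^*$ could be non-unique at points where the inner conditional expectation has a flat minimum (for example, for the square loss, $f^*(x) = \mathbb{E}[Y\mid X=x]$ is unique, but for general losses this may require an additional convexity-in-$\hat y$ assumption on $\mathcal{L}$). For the conclusion $\text{UQ}_{MU}=0$ to hold as an equality of functions, one either interprets it in the $\pi_X$-almost-sure or $L^2(\pi_X)$ sense, or one notes that any two minimizers induce identical predictive distributions and so are equivalent for the purposes of uncertainty quantification. I would state this convention explicitly at the start of the proof so that the chain of inequalities above directly yields $\text{UQ}_{MU}=0$.
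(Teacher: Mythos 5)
Your proof is correct and follows essentially the same route as the paper's: use the pointwise optimality of the Bayes predictor under the conditional risk, take expectations to get $R(f^*)\le R(h_\theta)$ for all $\theta$, and conclude $h_{\theta^*}=f^*$ from $f^*\in\mathcal{H}$. Your explicit discussion of non-uniqueness of the minimizer and the almost-sure interpretation of $\text{UQ}_{MU}=0$ is a welcome refinement that the paper's own one-line conclusion glosses over.
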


\textbf{Procedural Variability.} The empirical risk minimizer in \eqref{equ:empiricalminimizer} could not be obtained in deep learning because of the non-convex nature of neural networks. The standard way to solve \eqref{equ:empiricalminimizer} is to first randomly initialize the network parameters, which introduces the first additional epistemic uncertainty from \emph{random initialization}. Then we do a mini-batch gradient descent on the shuffled training data to finalize the parameter $\hat{\theta}$, which introduces the second additional epistemic uncertainty due to \emph{data ordering}. These lead to uncertainty in $\hat{\theta}$ associated with the training procedure.
% , i.e., uncertainty that is present even with deterministic data or infinite data size.

% , and we call \emph{random initialization} for convenience

We write $\Gamma$ as the random variable corresponding to procedural variability. The trained parameter $\hat{\theta}$ is an instantiation of the outcome from one training run based on $\Gamma$ and $\hat{\pi}_{\mathcal{D}_{tr}}$. %Let $q_{\Gamma}$ be its distribution.
Moreover, we reserve $\Gamma_0$ as the random variable that represents the random initialization of parameters. If the random initialization is the only uncertainty in procedural variability, then $\Gamma$ is a deterministic function of $\Gamma_0$ and they represent the same procedural uncertainty.
%After our training procedure, we use a realization of the training noise $\hat{\theta}$ (along with the empirical distribution $\hat{\pi}_{\mathcal{D}_{tr}}$), to build the model. Thus, we output $h(\hat{\theta}, \hat{\pi}_{\mathcal{D}_{tr}};x_0)$ as our prediction.
%Let $\delta_{\hat{\theta}^*}$ be the point mass distribution at the point $\hat{\theta}^*$ defined in \eqref{equ:empiricalminimizer}. More rigorously, if the optimal solution for \eqref{equ:empiricalminimizer} is not unique, we should think $\delta_{\hat{\theta}^*}$ as the uniform distribution defined on the set of all optimal solutions. 
The uncertainty from procedural variability can be written as
%$$\text{UQ}_{PV}=\delta_{\hat{\theta}^*}-q_{\Gamma}.$$
%Passing this distribution discrepancy to the function discrepancy, we have that 
$\text{UQ}_{PV}=h_{\hat{\theta}}-h_{\hat{\theta}^*}.$

\textbf{Data Variability.} Data variability measures the representative level of the training dataset. This is the most common epistemic uncertainty in classical statistics. The training dataset $\mathcal{D}_{tr}$ is a set of finite samples drawn from $\pi$, which cannot fully represent the population distribution $\pi$ (i.e., $\hat{\pi}_{\mathcal{D}_{tr}}\neq\pi$). Expressed at the hypothesis level, this uncertainty is written as 
$\text{UQ}_{DV}=h_{\hat{\theta}^*}-h_{\theta^*}.$
% In other words, it is the discrepancy between $\pi$ and $\hat{\pi}_{\mathcal{D}_{tr}}$:
% $$\text{UQ}_{DV}=\pi-\hat{\pi}_{\mathcal{D}_{tr}}.$$ this distribution discrepancy  function discrepancy

%  the total epistemic uncertainty
\textbf{Summary.} The three main sources of epistemic uncertainty add up to the total epistemic uncertainty:
$h_{\hat{\theta}}-f^*=:\text{UQ}_{EU}=\text{UQ}_{MU}+\text{UQ}_{PV}+\text{UQ}_{DV}$
which measures the discrepancy between $h_{\hat{\theta}}$ and $f^*$. We remark that $\text{UQ}_{PV}+\text{UQ}_{DV}$ is sometimes referred to as the estimation error in machine learning \citep{mohri2018foundations}. Figure \ref{myfigure} illustrates these three sources in epistemic uncertainty. Lastly,
% \subsection{Other Types of Uncertainty}
we present additional discussions on aleatoric uncertainty and predictive uncertainty in the Supplementary Material for completeness; see Section \ref{sec: other}.

%\newlength{\oldintextsep}
%\setlength{\oldintextsep}{\intextsep}
%\setlength\intextsep{0pt}

 \vspace{-0.5em} 
\section{Quantifying Epistemic Uncertainty}
 \vspace{-0.5em} 
Recall that the goal of supervised learning is to make prediction $h_{\hat{\theta}}$ on an unforeseen test point. Thus, quantifying epistemic uncertainty is important for risk-aware decision making, as a higher variance signifies a higher risk, and also for exploration in online learning as the variance is used in the principle of optimism in the face of uncertainty \citep{auer2002finite,abbasi2011improved,zhou2020neural}. A natural approach to quantify epistemic uncertainty is to estimate the epistemic variance of $h_{\hat{\theta}}(x_0)$ on a given $x_0$. To this end, we have the following decomposition:
\begin{theorem} \label{thm: decomp}
Assume that $\mathbb{E}[\text{UQ}_{PV}|\hat{\pi}_{\mathcal{D}_{tr}}]$ is constant w.r.t. $\hat{\pi}_{\mathcal{D}_{tr}}$ on the fixed data distribution $\pi$. We have
\begin{eqnarray} 
&&\text{Var} (\text{UQ}_{MU}+\text{UQ}_{PV}+\text{UQ}_{DV})\notag\\
&=&\text{Var} (\text{UQ}_{PV}+\text{UQ}_{DV})\label{equ:var1}\\
&=&\text{Var}( \mathbb{E}[\text{UQ}_{DV}|\hat{\pi}_{\mathcal{D}_{tr}}])+\mathbb{E}[\text{Var} (\text{UQ}_{PV}|\hat{\pi}_{\mathcal{D}_{tr}})]\label{equ:var2}
%\approx & \text{Var}( \mathbb{E}[h(\Gamma, \hat{\pi}_{\mathcal{D}_{tr}};x_0)|\hat{\pi}_{\mathcal{D}_{tr}}]) +\text{Var} (h(\Gamma, \pi;x_0))\\
\end{eqnarray}
Moreover, $\text{UQ}_{MU}$, $\text{UQ}_{PV}$ and $\text{UQ}_{DV}$ are uncorrelated.\label{prop:decomposition}
\end{theorem}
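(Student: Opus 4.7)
The plan is to exploit three structural facts about the components: $\text{UQ}_{MU}=h_{\theta^*}-f^*$ is a deterministic constant (both $h_{\theta^*}$ and $f^*$ are defined by the fixed, unknown distribution $\pi$, carrying no randomness); $\text{UQ}_{DV}=h_{\hat\theta^*}-h_{\theta^*}$ is measurable with respect to $\sigma(\hat\pi_{\mathcal{D}_{tr}})$ since $\hat\theta^*$ is the exact empirical risk minimizer, a deterministic functional of the data; and $\text{UQ}_{PV}=h_{\hat\theta}-h_{\hat\theta^*}$ is the only term carrying procedural randomness $\Gamma$ on top of the data randomness. With this in mind, everything follows from the law of total variance and covariance conditioned on $\hat\pi_{\mathcal{D}_{tr}}$, plus the constancy assumption on $\mathbb{E}[\text{UQ}_{PV}\mid\hat\pi_{\mathcal{D}_{tr}}]$.

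For the first equality \eqref{equ:var1}, I would simply observe that $\text{UQ}_{MU}$ is a constant and therefore adding it inside a $\text{Var}(\cdot)$ does not change the value. For \eqref{equ:var2}, I would apply the law of total variance with conditioning sigma-field $\mathcal{F}=\sigma(\hat\pi_{\mathcal{D}_{tr}})$ to $\text{UQ}_{PV}+\text{UQ}_{DV}$. Because $\text{UQ}_{DV}$ is $\mathcal{F}$-measurable, it is constant given $\mathcal{F}$, so $\text{Var}(\text{UQ}_{PV}+\text{UQ}_{DV}\mid\mathcal{F})=\text{Var}(\text{UQ}_{PV}\mid\mathcal{F})$ and $\mathbb{E}[\text{UQ}_{PV}+\text{UQ}_{DV}\mid\mathcal{F}]=\mathbb{E}[\text{UQ}_{PV}\mid\mathcal{F}]+\text{UQ}_{DV}$. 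The assumption forces $\mathbb{E}[\text{UQ}_{PV}\mid\mathcal{F}]$ to be a constant, so its variance vanishes and we are left with $\text{Var}(\text{UQ}_{DV})$, which equals $\text{Var}(\mathbb{E}[\text{UQ}_{DV}\mid\hat\pi_{\mathcal{D}_{tr}}])$ by $\mathcal{F}$-measurability of $\text{UQ}_{DV}$. Combining the two parts of the total variance formula gives \eqref{equ:var2}.

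For pairwise uncorrelatedness, any covariance involving $\text{UQ}_{MU}$ is zero because $\text{UQ}_{MU}$ is deterministic. The nontrivial case is $\text{Cov}(\text{UQ}_{PV},\text{UQ}_{DV})$, which I would decompose via the total covariance formula $\text{Cov}(U,V)=\mathbb{E}[\text{Cov}(U,V\mid\mathcal{F})]+\text{Cov}(\mathbb{E}[U\mid\mathcal{F}],\mathbb{E}[V\mid\mathcal{F}])$. The first term is zero because $\text{UQ}_{DV}$ is $\mathcal{F}$-measurable, making the conditional covariance vanish; the second term is zero because the constancy assumption renders $\mathbb{E}[\text{UQ}_{PV}\mid\mathcal{F}]$ non-random.

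There is no deep difficulty in the proof; the main point that requires care is articulating precisely why $\text{UQ}_{DV}$ is $\sigma(\hat\pi_{\mathcal{D}_{tr}})$-measurable (which relies on $\hat\theta^*$ being the \emph{exact} empirical minimizer, not the trained $\hat\theta$) and, conceptually, acknowledging just how strong the constancy assumption on $\mathbb{E}[\text{UQ}_{PV}\mid\hat\pi_{\mathcal{D}_{tr}}]$ really is. Indeed, the authors themselves flag this limitation in the 2023 update note at the top of the paper. Once this assumption is granted, the two variance identities and the uncorrelatedness are immediate consequences of standard total variance/covariance decompositions.
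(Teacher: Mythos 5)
Your proposal is correct and follows essentially the same route as the paper's proof: both rest on $\text{UQ}_{MU}$ being deterministic, $\text{UQ}_{DV}$ being a function of $\hat{\pi}_{\mathcal{D}_{tr}}$ alone, and the constancy assumption killing $\text{Var}(\mathbb{E}[\text{UQ}_{PV}\mid\hat{\pi}_{\mathcal{D}_{tr}}])$, with the law of total variance/covariance conditioned on $\hat{\pi}_{\mathcal{D}_{tr}}$ doing the work. The only cosmetic difference is that the paper first establishes $\text{Cov}(\text{UQ}_{PV},\text{UQ}_{DV})=0$ and then decomposes each variance separately, whereas you apply the total-variance formula directly to the sum; the ingredients are identical.
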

In Theorem \ref{prop:decomposition}, we have assumed we plug in the test point $x_0$ into $h$ when computing $\text{UQ}$, and the $\text{Var}$ there is thus taken with respect to both the procedural and data randomness while fixing $x_0$. The assumption on constant bias from procedural variability $\mathbb{E}[\text{UQ}_{PV}|\hat{\pi}_{\mathcal{D}_{tr}}]$ can be justified since, e.g., bias is negligible because of the achievable global minima in neural networks \cite{du2019gradient}. \eqref{equ:var1} follows since $\text{UQ}_{MU}$ is deterministic. \eqref{equ:var2} follows from conditioning and noting that $\mathbb{E}[\text{UQ}_{PV}|\hat{\pi}_{\mathcal{D}_{tr}}]$ is constant and $\text{UQ}_{DV}$ given $\hat\pi_{\mathcal D_{tr}}$ is also constant. By the same argument to obtain \eqref{equ:var2} we also conclude the uncorrelation among the uncertainties, similar to a mixed-effect model in classical statistics.

% can also readily see that all the three uncertainty sources, $\text{UQ}_{MU}$, $\text{UQ}_{PV}$ and $\text{UQ}_{DV}$, are \emph{uncorrelated} 
% We make two remarks:
% \begin{enumerate}
% \item 
% As long as we keep in mind we are evaluating at the test point $x_0$. 
% where a variance term is introduced for exploration, e.g., the uncertainty of the network parameters in the training procedure and After our training procedure, we obtain a realization $\hat{\theta}$ as a result of the procedural variability $\Gamma$ and the empirical distribution $\hat{\pi}_{\mathcal{D}_{tr}}$, to build the model. We output $h_{\hat{\theta}}(x_0)$ as a single model prediction on an unseen test point $x_0$. Note that this quantity is statistics as a function of the data $\mathcal{D}_{tr}$ and the procedural variability $\Gamma$. its epistemic variance  since we are interested in the epistemic uncertainty To quantify this aspect further, our main ingredient is better elaborate the different sources of epistemic uncertainty,  
% Our goal is to estimate
% $\text{Var} (h(\Gamma, \hat{\pi}_{\mathcal{D}_{tr}};x_0))$
% where  
% The following decomposition is elementary:
For convenience in the subsequent discussion, we express $h_{\hat{\theta}}(\cdot)$ as $h(\Gamma, \hat{\pi}_{\mathcal{D}_{tr}};\cdot)$
where $\Gamma$ corresponds to the procedural randomness and  $\hat{\pi}_{\mathcal{D}_{tr}}$ encodes the data noise. We also denote the conditional population mean with respect to the procedural variability as  
$\bar{h}(\hat{\pi}_{\mathcal{D}_{tr}};x_0):=\mathbb{E} [h(\Gamma, \hat{\pi}_{\mathcal{D}_{tr}};x_0)|\hat{\pi}_{\mathcal{D}_{tr}}].$
The above variance decomposition can be equivalently rewritten as
\begin{align} 
&\text{Var} (h(\Gamma, \hat{\pi}_{\mathcal{D}_{tr}};x_0))
= \text{Var}( \bar{h}(\hat{\pi}_{\mathcal{D}_{tr}};x_0))+\mathbb{E}[\text{Var} (h(\Gamma, \hat{\pi}_{\mathcal{D}_{tr}};x_0)|\hat{\pi}_{\mathcal{D}_{tr}})]\label{equ:var}
%\approx & \text{Var}( \mathbb{E}[h(\Gamma, \hat{\pi}_{\mathcal{D}_{tr}};x_0)|\hat{\pi}_{\mathcal{D}_{tr}}]) +\text{Var} (h(\Gamma, \pi;x_0))\\
\end{align}
%where $\pi$ in the last equation refers to the true data distribution. of $h(\Gamma, \hat{\pi}_{\mathcal{D}_{tr}};x_0)$
In either \eqref{equ:var2} or \eqref{equ:var}, the overall epistemic variance 
consists of two terms, where the first term can be interpreted as coming from data variability and the second from procedural variability.
In the following, we illustrate a direct application of our decomposition to explain the statistical power of deep ensemble \citep{lee2015m,lakshminarayanan2017simple,fort2019deep}.

% To estimate the inner conditional expectation and variance, we apply an ensemble technique, which is essentially the deep ensemble method \citep{lee2015m,lakshminarayanan2017simple,fort2019deep}.
 \vspace{-0.5em} 
\subsection{Deep Ensemble Reduces Procedural Variability} \label{sec: DE}
\vspace{-0.5em} 
Compared with other ensemble methods, deep ensemble has empirically demonstrated its state-of-the-art performance on model improvement \citep{ashukha2020pitfalls}. However, few theoretical justification has been developed despite its superior performance. In this section, we argue that the deep ensemble predictor improves the statistical profile over a single model by reducing its procedural variability. The \textit{deep ensemble predictor} $\bar{h}_{m}(x_0)$ at $x_0$ is defined as: 
%In the deep ensemble method, we obtain $h(\hat{\theta}_i, \hat{\pi}_{\mathcal{D}_{tr}};x_0)$ where $i=1,2..,m$ as described above and 
$
\bar{h}_{m'}(x_0):=\frac{1}{m'} \sum_{i=1}^{m'} h_{\hat{\theta}_i}(x_0)
$
where $m'$ is the number of independent training times (with the same training data) and $\hat{\theta}_i$, $i \in [m']$ represent the finalized network parameters.
 %a first application of our UQ framework to understand the benefit of deep ensemble on a rigorous statistical ground. Here, we apply our  to understand the benefit of deep ensemble on a rigorous statistical ground: 
Our theory is formally stated as:
% in the following theorem: %To see this, supposing $m$ is large enough, $\bar{h}_m(\hat{\pi}_{\mathcal{D}_{tr}};x_0)$ gives approaximately
%$$\mathbb{E}[h(\Gamma, \hat{\pi}_{\mathcal{D}_{tr}};x_0)|\hat{\pi}_{\mathcal{D}_{tr}}]$$
%In the deep ensemble method, we train a model (with the same training data) repeatedly each with an independent training run, i.e., with a new random initialization of the network weights. Suppose we repeat the independent training runs $m'$ times each giving a parameter $\hat{\theta}_i$,  where $i \in [m']$.
%In the deep ensemble method, we obtain $h(\hat{\theta}_i, \hat{\pi}_{\mathcal{D}_{tr}};x_0)$ where $i=1,2..,m$ as described above and 
%We then output
%\begin{equation} \label{equ:DE}
%\bar{h}_{m'}(x_0):=\frac{1}{m'} \sum_{i=1}^{m'} %h_{\hat{\theta}_i}(x_0)
%\end{equation}
%as our \textit{deep ensemble predictor} at $x_0$. 
%Deep ensemble has been proven to be the most successful approach to improve prediction accuracy compared with other ensemble methods \citep{ashukha2020pitfalls}, yet few theoretical justification has been given. 
 %a first application of our UQ framework to understand the benefit of deep ensemble on a rigorous statistical ground. Here, we apply our  to understand the benefit of deep ensemble on a rigorous statistical ground: 
%Using our developed uncertainty quantification framework, we argue that the deep ensemble predictor $\bar{h}_{m}(x_0)$ improves the statistical profile over a single model

\begin{theorem} \label{thm:var}
We have
{\small
\begin{align}
&\text{Var} (\bar{h}_{m'}(x_0))%\notag\\
=  \text{Var}(\mathbb{E}[h(\Gamma, \hat{\pi}_{\mathcal{D}_{tr}};x_0)|\hat{\pi}_{\mathcal{D}_{tr}}]) + \frac{1}{m'}\mathbb{E}[ \text{Var}(h(\Gamma, \hat{\pi}_{\mathcal{D}_{tr}};x_0)|\hat{\pi}_{\mathcal{D}_{tr}})]\label{var de}\\
\le & \text{Var}(\mathbb{E}[h(\Gamma, \hat{\pi}_{\mathcal{D}_{tr}};x_0)|\hat{\pi}_{\mathcal{D}_{tr}}]) +\mathbb{E}[ \text{Var}(h(\Gamma, \hat{\pi}_{\mathcal{D}_{tr}};x_0)|\hat{\pi}_{\mathcal{D}_{tr}})]%\notag\\
=  \text{Var}(h_{\hat{\theta}_1}(x_0)).\notag
\end{align}}
Therefore, a deep ensemble predictor has less epistemic variance than a single model predictor. 
\end{theorem}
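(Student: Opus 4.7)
The plan is to derive the variance identity \eqref{var de} via a straightforward application of the law of total variance, conditioning on the empirical training distribution $\hat{\pi}_{\mathcal{D}_{tr}}$, and then obtain the inequality by comparing coefficients with the single-model variance decomposition \eqref{equ:var}.

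First, I would write $\bar{h}_{m'}(x_0) = \frac{1}{m'}\sum_{i=1}^{m'} h(\Gamma_i, \hat{\pi}_{\mathcal{D}_{tr}};x_0)$ and observe that the procedural randomness variables $\Gamma_1, \dots, \Gamma_{m'}$ are independent across ensemble members and are independent of $\hat{\pi}_{\mathcal{D}_{tr}}$ (since training is carried out independently on the same training data). Consequently, conditional on $\hat{\pi}_{\mathcal{D}_{tr}}$, the ensemble members $h(\Gamma_i, \hat{\pi}_{\mathcal{D}_{tr}};x_0)$ are i.i.d.\ copies of $h(\Gamma, \hat{\pi}_{\mathcal{D}_{tr}};x_0)$. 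This immediately yields
\begin{align*}
\mathbb{E}[\bar{h}_{m'}(x_0)\mid\hat{\pi}_{\mathcal{D}_{tr}}] &= \mathbb{E}[h(\Gamma, \hat{\pi}_{\mathcal{D}_{tr}};x_0)\mid\hat{\pi}_{\mathcal{D}_{tr}}],\\
\text{Var}(\bar{h}_{m'}(x_0)\mid\hat{\pi}_{\mathcal{D}_{tr}}) &= \tfrac{1}{m'}\text{Var}(h(\Gamma, \hat{\pi}_{\mathcal{D}_{tr}};x_0)\mid\hat{\pi}_{\mathcal{D}_{tr}}).
\end{align*}

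Next, I would apply the law of total variance, $\text{Var}(Z) = \text{Var}(\mathbb{E}[Z\mid\mathcal{F}]) + \mathbb{E}[\text{Var}(Z\mid\mathcal{F})]$, with $Z=\bar{h}_{m'}(x_0)$ and $\mathcal{F}=\sigma(\hat{\pi}_{\mathcal{D}_{tr}})$. Plugging in the two conditional moments above reproduces exactly the identity \eqref{var de}. For the inequality, I would observe that the first term of the decomposition is independent of $m'$, while the second term has a prefactor $1/m' \le 1$; replacing the prefactor by $1$ gives the right-hand side of \eqref{equ:var} specialized to a single model $h_{\hat{\theta}_1}(x_0)$, establishing $\text{Var}(\bar{h}_{m'}(x_0)) \le \text{Var}(h_{\hat{\theta}_1}(x_0))$.

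Finally, for the interpretation that deep ensembling reduces \emph{procedural} rather than \emph{data} variability, I would note that in \eqref{var de} the data variability term $\text{Var}(\mathbb{E}[h(\Gamma, \hat{\pi}_{\mathcal{D}_{tr}};x_0)\mid\hat{\pi}_{\mathcal{D}_{tr}}])$ is unchanged, whereas the procedural variability term $\mathbb{E}[\text{Var}(h(\Gamma, \hat{\pi}_{\mathcal{D}_{tr}};x_0)\mid\hat{\pi}_{\mathcal{D}_{tr}})]$ is scaled by $1/m'$, which vanishes as $m'\to\infty$. There is no genuine obstacle here; the only subtlety worth flagging is the justification of conditional i.i.d., which relies on independence of the random initializations and data-ordering used across ensemble members, and these being drawn independently of the training data. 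Given that independence, the rest is bookkeeping with total variance.
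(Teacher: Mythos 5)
Your argument is correct and is essentially identical to the paper's own proof: both compute the conditional mean and the $1/m'$-scaled conditional variance of $\bar{h}_{m'}(x_0)$ given $\hat{\pi}_{\mathcal{D}_{tr}}$ using conditional independence of the ensemble members, and then apply the law of total variance to obtain \eqref{var de} and the inequality. No gaps to flag.
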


%From another point of view, deep ensemble is a conditional Monte Carlo estimator of a single model {\color{red} NEED REFERENCES.}, which explains why ensemble improves prediction accuracy.

In \eqref{var de}, by averaging $m'$ training runs with independent procedural randomness, the second term of the variance decomposition of $\bar{h}_{m'}(x_0)$, which corresponds to the procedural variability, is reduced by $m'$ folds compared to $\bar{h}_{m'}(x_0)$. If we could use a very large $m'$, then this procedural variability can be completely removed, which resembles the conditional Monte Carlo method \citep{asmussen2007stochastic} in variance reduction. Nonetheless, as long as $m'>1$, variance reduction for $\bar{h}_{m'}(x_0)$ over $h_{\hat{\theta}_1}(x_0)$ is already present, and indeed a small number of single model predictors are used in the ensemble in practice. We also contrast our explanation on deep ensemble to bagging \citep{breiman1996bagging}. The former averages over training runs while the latter over data resample. Bagging is known to improve prediction accuracy by smoothing base predictors \citep{buhlmann2002analyzing}, and is more effective when these base predictors are non-smooth such as decision trees. This is very different from the procedural variability reduction we have asserted for deep ensemble.

% For the first component in \eqref{equ:var}, we may use the ensemble average to estimate the inner expectation
% %\begin{align} \label{equ:firstcomponent}
% %&\text{Var}(\mathbb{E} [h(\Gamma,\hat{\pi}_{\mathcal{D}_{tr}};x_0)|\hat{\pi}_{\mathcal{D}_{tr}}]) = \text{Var}(\bar{h}(\hat{\pi}_{\mathcal{D}_{tr}};x_0)) \approx  \nonumber\\ 
% %&\text{Var} \left(\frac{1}{m} \sum_{i=1}^m h(\hat{\theta}_j, \hat{\pi}_{\mathcal{D}_{tr}};x_0)\right)=\text{Var} (\bar{h}_m(\hat{\pi}_{\mathcal{D}_{tr}};x_0)).
% %\end{align}
% \begin{align} \label{equ:firstcomponent}
% \text{Var}(\mathbb{E} [h(\Gamma,\hat{\pi}_{\mathcal{D}_{tr}};x_0)|\hat{\pi}_{\mathcal{D}_{tr}}])\approx \text{Var} (\bar{h}_{m'}(x_0)).
% \end{align}
% with a bias vanishing at $O(\frac{1}{m'})$ due to Theorem \ref{thm:var}. However, we cannot directly estimate the outer variance on the data only based on those ensemble models. %since the ensemble models $\bar{h}_m(\hat{\pi}_{\mathcal{D}_{tr}};x_0)$ . %The outer variance on the data needs to be treated more carefully.

\subsection{Challenges of Existing Inference Approaches}
 
In the rest of this paper, we focus on the estimation of epistemic variance, namely the two terms in \eqref{equ:var} or their sum, or \eqref{var de} in the deep ensemble case.

% . For convenience, let
% \begin{align}
% \frac{\sigma^2}{n}&:=\text{Var}(\mathbb{E}[h(\Gamma, \hat{\pi}_{\mathcal{D}_{tr}};x_0)|\hat{\pi}_{\mathcal{D}_{tr}}])\label{DV def}\\
% \tau^2&:=\mathbb{E}[ \text{Var}(h(\Gamma, \hat{\pi}_{\mathcal{D}_{tr}};x_0)|\hat{\pi}_{\mathcal{D}_{tr}})]\label{PV def}
% \end{align}
% be the data and procedural variability components in \eqref{equ:var} respectively (we will see later why the data variance scales with the data size $n$ as $\sigma^2/n$). We would like to estimate $\sigma^2/n$ and $\tau^2$, or simply $\text{Var}(h_{\hat{\theta}_1}(x_0))=\sigma^2/n+\tau^2$ or $\text{Var} (\bar{h}_{m'}(x_0))=\sigma^2/n+\tau^2/m'$ in the deep ensemble case.

Borrowed from classical statistics, two high-level approaches may be considered to estimate these variances. First is an analytical approach using the delta method, which involves computing the influence function that acts as the functional gradient of the predictor with respect to the data distribution. Second is to use resampling, such as the bootstrap or jackknife. Unfortunately, each approach runs into its respective barrier. The performance of the delta method highly depends on the quality of the first-order approximation. However, neural networks are in general ``excessively" non-convex and the $M$-estimation underpinning the delta method breaks down. Resampling requires enough resample replications and thus demands the effort of re-training. Even worse, in estimating the data variability (the first term in \eqref{equ:var} or \eqref{var de}), in fact we would need a ``nested" sampling, i.e., first resample the data many times and for each fixed resample, run and average an inner number of re-training with different procedural randomness in order to compute the involved conditional expectation, which is more demanding than a standard bootstrap or jackknife.

In the next two sections, we present two approaches to bypass each respective challenge. Our first approach applies the theory of NTK to derive an influence function that does not utilize convexity assumptions. Our second approach uses batching that divides data into a small number of batches upon whom re-training applies, which does not rely on many resample replications nor a nested sampling. We note that the NTK approach requires the evaluation of the Gram matrix and its inversion. Batching avoids this computation, and thus is recommended for large dataset. After presenting these approaches individually in Sections \ref{sec:IF} and \ref{sec: batch}, we provide a detailed discussion in Section \ref{sec:combine} on how to combine them judiciously.

 \vspace{-0.5em} 
\section{Influence Function Method via NTK}\label{sec:IF}
 \vspace{-0.5em} 
%In this section, we derive the estimate of the data variability (the first term in \eqref{equ:var}) and procedural variability (the second term in \eqref{equ:var}) separately.
 
\subsection{Estimating Influence Function} \label{sec: IF}
 \vspace{-0.5em} 
%\subsection{Estimating Data Variance via Influence Functions} \label{sec: IF} 
% In this section, we compute the first component in \eqref{equ:var} by influence functions. %\citep{liu2018statistically}.

%Then the KR predictor is expressed as 
%$$h_{KR}(x_0)=\hat{k}(x_0)(K+\lambda I)^{-1}\bm{y}$$

%However, we notice that we need to estimate the variance of $y_i$ given $x_i$ in order to complete the estimation. If $\sigma_i^2$ is a constant, say $\sigma^2$, independent of $i\in [n]$, then it can be estimated via
%$$\sigma^2=\frac{1}{m-1}\sum_{i=1}^m (y_i-h_{KR}(x_i))^2$$
%In the presence of heteroskedasticity, however, $\sigma_i^2$ is hard to analyze. Therefore, we introduce the method of infinitesimal Jackknife.

The theory of NTK implies that the neural network (NN) predictor is essentially a kernel regression (KR) predictor whose kernel is the NTK when the network is sufficiently wide; see detailed discussion in Section \ref{sec: NTK}. Therefore, one direction to estimate the epistemic variance is to analyze the variance of its corresponding KR predictor, via its influence function. The influence function has a long history in statistics \citep{hampel1974influence}, and is used as a nonparametric building block in the delta method for variance estimation \citep{fernholz2012mises}. It was introduced in modern machine learning initially for understanding the effect of a training point on a model's prediction \citep{koh2017understanding,koh2019accuracy} but could be fragile for modern NNs \citep{basu2020influence} partially because the convexity assumption in deep learning is violated. We remark that our work focuses on applying the influence function to compute the epistemic variance. Moreover, our work calculates the influence function for the NN predictor, equivalently the KR predictor using the NTK, which is different from the influence function for network weights \citep{koh2017understanding}. Our approach can bypass the convexity assumption and cast another direction to study neural networks by influence functions.

%It is easy to calculate its variance as follows \citep{liu2018statistically}:
%$$\text{Var}(h_{KR}(x_0)|\bm{x}) = $$ See Section \ref{sec: NTK} for definitions of NTK
We estimate the first component in \eqref{equ:var}. Let $\bm{K}=(K(x_i,x_j))_{i,j=1,...,n} \in \mathbb{R}^{n\times n}$ be the NTK Gram matrix evaluated
on training data. %Denote the minimum eigenvalue of $\bm{K}$, $\lambda_0:= \lambda_{\min}(\bm{K})$. 
For a test point $x_0 \in \mathbb{R}^d$, we let $K(x_0, \bm{x}) \in \mathbb{R}^{n}$ be the kernel value evaluated between the testing point and all training points, i.e., $K(x_0, \bm{x}):= (K(x_0, x_1), K(x_0, x_2), ... , K(x_0, x_n))^T$. We first obtain the inner expectation based on \eqref{equ:krpredictor} and \eqref{equ:NN=KR}:
\begin{equation} \label{equ:KRR}
\bar{h}(\hat{\pi}_{\mathcal{D}_{tr}};x_0)= h_0(x_0) +K(x_0, \bm{x})^T(\bm{K}+\lambda n \bm{I})^{-1} (\bm{y}-h_0(\bm{x}))
\end{equation}
and 
$h_0(x_0)= \mathbb{E}_{\theta_0}[h(\theta_0;x_0)]$
where the expectation is with respect to the random initialization ($h_0$ does not depend on the data). There are two ways to approximate $\bar{h}(\hat{\pi}_{\mathcal{D}_{tr}};x_0)$. It can be approximated by the \textit{ensemble average} $\bar{h}_m(x_0)$ for sufficiently large $m$. Or it can be obtained by computing the NTK $K$ and estimating $h_0(x_0)$ in  \eqref{equ:KRR} without training the network at all. In practice, we notice that both methods give us satisfactory performance \citep{lee2019wide}. %In the following, we write
%$\bar{h}(x_0)=\bar{h}(\hat{\pi}_{\mathcal{D}_{tr}};x_0)$ for short.
%$$\bar{h}_m(x_0)=h_0(x_0)
%+K(x_0, \bm{x})^T (\bm{K}+\lambda n \bm{I})^{-1} (\bm{y}-h_0(\bm{x}))$$
%where $h_0(x_0)= \frac{1}{m}\sum_{i=1}^m h(\theta^{(i)}_0;x_0)$ and $\theta^{(i)}_0$ is the $i$-th randomly initialized parameters in the ensemble. Note that $h_0(x_0)$ is independent of the training data. Our first observation is that
%\begin{proposition} \label{thm:losskr}
%$\bar{h}(\hat{\pi}_{\mathcal{D}_{tr}};\cdot)$ is the solution to the following kernel ridge regression problem
%\begin{equation} \label{equ:losskr2}
%\bar{h}-h_0= \argmin{g\in \bar{\mathcal{H}}} \frac{1}{n}\sum_{i=1}^n (y_i-h_0(x_i)-g(x_i))^2 + \lambda \|g\|^2_\mathcal{\bar{H}}.
%\end{equation}
%\end{proposition}

We recall the definition of the influence function. Let $P$ be a distribution on the domain $\mathcal{X}\times \mathcal{Y}$. Let $z = (z_x, z_y) \in \mathcal{X}\times \mathcal{Y}$. Suppose $P_{\varepsilon,z} = (1 -\varepsilon)P + \varepsilon\delta_z$ where $\delta_z$ denotes the point mass distribution in $z$. Let $T$ be a statistical functional $T : P \to T(P)$. Then the
influence function of $T$ at $P$ at the point $z$ is defined as
$IF(z; T, P) = \lim_{\varepsilon\to 0} \frac{
T(P_{\varepsilon,z}) - T(P)}{\varepsilon}.$
Let $T(P)$ be the predictor $\bar{h}(P;x_0)$ based on the data distribution $P$. Note that $\bar{h}(P;x_0)$ has removed the procedural variability. If $P=\hat{\pi}_{\mathcal{D}_{tr}}$, then $T(\hat{\pi}_{\mathcal{D}_{tr}})$ is $\bar{h}(\hat{\pi}_{\mathcal{D}_{tr}};x_0)$. If $P=\pi$, then $T(\pi)$ is $h_{\theta^*}(x_0)$. Under some mild conditions (e.g., $T$ is $\rho_\infty$-Hadamard differentiable; see \citet{fernholz2012mises}), the central limit theorem holds: $\sqrt{n}(T(\hat{\pi}_{\mathcal{D}_{tr}}) - T(\pi))$ is asymptotically normal with mean 0 and variance
\begin{equation} \label{sigma}
\sigma^2 = \int_{z} IF^2(z; T, \pi)d\pi(z).
\end{equation}
In other words, $T(\hat{\pi}_{\mathcal{D}_{tr}})\approx \mathcal{N}(T(\pi),\sigma^2/n)$.
Therefore, we have approximately
$\text{Var}(\bar{h}(\hat{\pi}_{\mathcal{D}_{tr}};x_0))\approx \frac{\sigma^2}{n}\approx \frac{1}{n^2} \sum_{i=1}^{n} IF^2(z_i; T, \pi)$
where $z_i=(x_i,y_i)$ is from the training dataset. This formula gives us the approximation of the first component in \eqref{equ:var}. Hence our main target is to compute the influence function. In fact, we have
% , as discussed in \eqref{equ:firstcomponent}
\begin{theorem} \label{thm:IF}
The influence function is given by
{\small
\begin{equation} \label{equ:IF}
IF(z; T, \hat{\pi}_{\mathcal{D}_{tr}})= K(x_0, \bm{x})^T (\bm{K}+\lambda n \bm{I})^{-1} M_z(\bm{x})-M_z(x_0)
\end{equation}}
where $M_z(\bm{x}):=(M_z(x_1),...,M_z(x_n))^T$ and $M_z(x)$ is 
{\small
\begin{align*}
M_z(x):=\bar{h}(\hat{\pi}_{\mathcal{D}_{tr}};x)-h_0(x)-\frac{1}{\lambda}(z_{y}-\bar{h}(\hat{\pi}_{\mathcal{D}_{tr}};z_x)) K(z_x,x).  
\end{align*}}
\end{theorem}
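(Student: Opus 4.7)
The plan is to compute $IF(z; T, \hat{\pi}_{\mathcal{D}_{tr}}) = \frac{d}{d\varepsilon} T(P_{\varepsilon, z})\big|_{\varepsilon=0}$ by implicit differentiation of the kernel ridge regression (KRR) normal equations at the contaminated distribution. First, I would interpret $\bar{h}(P;x_0)$ for a general probability measure $P$ as the minimizer of the weighted population risk $\mathbb{E}_P[(h(X)-Y)^2]+\lambda\|h-h_0\|_{\mathcal{H}_K}^2$ (the population analog of \eqref{equ:KRR}), so that the representer theorem forces the minimizer to lie in the span of $\{K(x_j,\cdot)\}$ over the support of $P$. Applied to $P_{\varepsilon,z}=(1-\varepsilon)\hat{\pi}_{\mathcal{D}_{tr}}+\varepsilon\delta_z$ with support $\{x_1,\dots,x_n,z_x\}$, and writing $\bm{W}_\varepsilon=\mathrm{diag}((1-\varepsilon)/n,\dots,(1-\varepsilon)/n,\varepsilon)$ together with the $(n{+}1)\times(n{+}1)$ Gram matrix $\bm{K}_{\text{ext}}$, the first-order optimality condition becomes
\begin{equation*}
(\bm{W}_\varepsilon \bm{K}_{\text{ext}} + \lambda \bm{I})\,\alpha_{\text{ext}}(\varepsilon) \;=\; \bm{W}_\varepsilon\,(\bm{y}_{\text{ext}} - h_0(\bm{x}_{\text{ext}})),
\end{equation*}
which at $\varepsilon=0$ recovers $\alpha_{\text{ext}}(0)=(\alpha,0)$ with $\alpha=(\bm{K}+\lambda n\bm{I})^{-1}(\bm{y}-h_0(\bm{x}))$, in agreement with \eqref{equ:KRR}.

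Next I would differentiate the normal equation in $\varepsilon$ and evaluate at $\varepsilon=0$. With $\bm{W}_0'=\mathrm{diag}(-1/n,\dots,-1/n,1)$, the derivative system reads
\begin{equation*}
(\bm{W}_0 \bm{K}_{\text{ext}} + \lambda \bm{I})\,\alpha_{\text{ext}}'(0) \;=\; \bm{W}_0'\bigl[(\bm{y}_{\text{ext}} - h_0(\bm{x}_{\text{ext}})) - \bm{K}_{\text{ext}}\alpha_{\text{ext}}(0)\bigr].
\end{equation*}
The identity $\bm{K}\alpha-(\bm{y}-h_0(\bm{x}))=-\lambda n\,\alpha$ (which is just a rearrangement of the definition of $\alpha$) reduces the first $n$ coordinates of the right-hand side to $-\lambda\alpha$, while the last coordinate becomes $z_y-\bar{h}(\hat{\pi}_{\mathcal{D}_{tr}};z_x)$ by definition of $\bar{h}$. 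Since $\bm{W}_0\bm{K}_{\text{ext}}+\lambda\bm{I}$ is block upper triangular with lower-right entry $\lambda$, back-substitution immediately yields the last component $\alpha_{\text{ext}}'(0)_{n+1}=(z_y-\bar{h}(\hat{\pi}_{\mathcal{D}_{tr}};z_x))/\lambda$, and then the top block solves a single linear system in $\bm{K}/n+\lambda\bm{I}$, i.e.\ in $(\bm{K}+\lambda n\bm{I})$ after scaling.

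Finally, I would substitute $\alpha_{\text{ext}}'(0)$ into $IF(z;T,\hat{\pi}_{\mathcal{D}_{tr}})=K(x_0,\bm{x}_{\text{ext}})^{T}\alpha_{\text{ext}}'(0)$ and simplify. Here the resolvent identity $(\bm{K}+\lambda n\bm{I})^{-1}\bm{K}=\bm{I}-\lambda n(\bm{K}+\lambda n\bm{I})^{-1}$ collapses the term $-\lambda n K(x_0,\bm{x})^{T}(\bm{K}+\lambda n\bm{I})^{-1}\alpha$ and combines it with the contribution involving $K(z_x,\bm{x})$; grouping like terms produces exactly $K(x_0,\bm{x})^{T}(\bm{K}+\lambda n\bm{I})^{-1}M_z(\bm{x})-M_z(x_0)$, with $M_z$ emerging naturally from the arrangement. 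The main obstacle I anticipate is a careful bookkeeping of the two different normalizations ($\lambda n$ at the empirical level versus $\lambda$ in the contaminated normal equations), and rigorously justifying that the contaminated predictor $\bar{h}(P_{\varepsilon,z};x_0)$ can be legitimately identified with the weighted KRR solution in the RKHS setting induced by the NTK; beyond these set-up points, the argument is a direct implicit-function-theorem-style linear-algebra calculation.
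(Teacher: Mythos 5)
Your proposal is correct: I checked the calculation and the extended normal-equation route does land exactly on $K(x_0,\bm{x})^T(\bm{K}+\lambda n\bm{I})^{-1}M_z(\bm{x})-M_z(x_0)$ (the lower-right entry $\lambda$ of the block-triangular system gives $\alpha_{\text{ext}}'(0)_{n+1}=(z_y-\bar h(\hat\pi_{\mathcal{D}_{tr}};z_x))/\lambda$, which is precisely where the $1/\lambda$ factor in $M_z$ comes from, and the resolvent identity collapses the remaining term as you describe). The route is genuinely different in execution from the paper's, even though both ultimately differentiate the first-order optimality condition of the regularized risk. The paper first establishes that $\bar h-h_0$ solves a kernel ridge regression problem (Lemma \ref{thm:losskr}) and then invokes a ready-made operator-level influence-function formula (Proposition 5 of \citet{debruyne2008model}), $IF=-S^{-1}(2\lambda(\bar h-h_0))+2(z_y-\bar h(z_x))S^{-1}\Phi(z_x)$ with $S(f)=2\lambda f+\frac{2}{n}\sum_j f(x_j)\Phi(x_j)$; the work is then in computing $S^{-1}$ applied to two specific elements by first solving an $n\times n$ linear system on the training points. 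Your version re-derives everything in finite dimensions via the representer theorem on the extended support $\{x_1,\dots,x_n,z_x\}$ and implicit differentiation of the $(n{+}1)$-dimensional normal equations. What the paper's approach buys is brevity and an off-the-shelf justification of the form of the influence function (at the cost of inheriting the regularity conditions of the cited proposition); what yours buys is a self-contained, purely linear-algebraic argument that makes the block structure and the origin of each term in $M_z$ transparent. The two set-up points you flag are real and are exactly where the rigor lives: you must (i) define $T(P_{\varepsilon,z})$ as the minimizer of the $P_{\varepsilon,z}$-weighted regularized risk (the paper makes the same identification implicitly through Lemma \ref{thm:losskr} and the cited proposition), and (ii) justify differentiability of $\varepsilon\mapsto\alpha_{\text{ext}}(\varepsilon)$ at $0$, which follows from invertibility of $\bm{W}_0\bm{K}_{\text{ext}}+\lambda\bm{I}$ (block triangular with invertible diagonal blocks) and the implicit function theorem. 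With those two points spelled out, your argument is a complete and valid alternative proof.
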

\begin{theorem} \label{thm:IF2}
Suppose that $T$ is continuously $\rho_\infty$-Hadamard differentiable at $\pi$ and $\hat{\pi}_{\mathcal{D}_{tr}}$. Suppose that $n(\bar{h}(\hat{\pi}_{\mathcal{D}_{tr}};x_0)-h_{\theta^*}(x_0))^2$ is uniformly integrable. Then we have the strong consistency
\begin{align*}
\lim_{n\to \infty}  &\Bigg|\frac{1}{n}\sum_{i=1}^{n} IF^2((x_i,y_i); T, \hat{\pi}_{\mathcal{D}_{tr}})
-n\ \text{Var} (\bar{h}(\hat{\pi}_{\mathcal{D}_{tr}};x_0))\Bigg| =0, \quad a.s.  
\end{align*}
\end{theorem}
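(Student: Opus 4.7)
The plan is to establish the almost-sure convergence via a three-term decomposition around the population-level asymptotic variance $\sigma^2$ defined in \eqref{sigma}. Applying the triangle inequality, I bound
$$\left|\frac{1}{n}\sum_{i=1}^{n} IF^2(z_i; T, \hat{\pi}_{\mathcal{D}_{tr}}) - n\,\text{Var}(\bar{h}(\hat{\pi}_{\mathcal{D}_{tr}};x_0))\right| \le A_n + B_n + C_n,$$
where $A_n = \bigl|\frac{1}{n}\sum_{i=1}^{n}(IF^2(z_i; T, \hat{\pi}_{\mathcal{D}_{tr}}) - IF^2(z_i; T, \pi))\bigr|$ is the plug-in error, $B_n = \bigl|\frac{1}{n}\sum_{i=1}^{n} IF^2(z_i; T, \pi) - \sigma^2\bigr|$ is a standard sample-mean deviation, and $C_n = \bigl|\sigma^2 - n\,\text{Var}(\bar{h}(\hat{\pi}_{\mathcal{D}_{tr}};x_0))\bigr|$ is the gap between the exact scaled variance and its asymptotic target. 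I will show each term vanishes.

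For $B_n$, since $z_i$ are i.i.d.\ from $\pi$ and $\mathbb{E}_\pi[IF^2(Z;T,\pi)] = \sigma^2 < \infty$ (this finiteness is implicit in the continuous Hadamard differentiability and underwrites the CLT statement preceding \eqref{sigma}), Kolmogorov's strong law of large numbers gives $B_n \to 0$ almost surely. For $C_n$, continuous $\rho_\infty$-Hadamard differentiability together with the functional delta method yields $\sqrt{n}(\bar{h}(\hat{\pi}_{\mathcal{D}_{tr}};x_0) - h_{\theta^*}(x_0)) \Rightarrow \mathcal{N}(0,\sigma^2)$, and the uniform integrability hypothesis on $n(\bar{h}(\hat{\pi}_{\mathcal{D}_{tr}};x_0) - h_{\theta^*}(x_0))^2$ upgrades this convergence in distribution to convergence of second moments, so $n\,\text{Var}(\bar{h}(\hat{\pi}_{\mathcal{D}_{tr}};x_0)) \to \sigma^2$ and $C_n \to 0$ deterministically.

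The main obstacle is $A_n$. Continuous Hadamard differentiability ensures that $P \mapsto IF(\cdot; T, P)$ is continuous in an appropriate dual sense, so that for each fixed $z$ the random quantity $IF(z; T, \hat{\pi}_{\mathcal{D}_{tr}})$ converges to $IF(z; T, \pi)$ almost surely as $\hat{\pi}_{\mathcal{D}_{tr}} \to \pi$. The difficulty is that the evaluation points $z_i$ in $A_n$ are the \emph{same} sample used to form $\hat{\pi}_{\mathcal{D}_{tr}}$, so I cannot directly invoke pointwise continuity. I plan to handle this via a uniform law-of-large-numbers argument: factor
$$IF^2(z_i; T, \hat{\pi}_{\mathcal{D}_{tr}}) - IF^2(z_i; T, \pi) = \bigl(IF(z_i; T, \hat{\pi}_{\mathcal{D}_{tr}}) + IF(z_i; T, \pi)\bigr)\bigl(IF(z_i; T, \hat{\pi}_{\mathcal{D}_{tr}}) - IF(z_i; T, \pi)\bigr),$$
apply Cauchy--Schwarz, and then exploit a Glivenko--Cantelli-type control on the class $\{IF(\cdot; T, P) : P \text{ in a neighborhood of }\pi\}$. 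For our specific influence function, the explicit formula in Theorem~\ref{thm:IF} reduces this to controlling perturbations in the Gram matrix $\bm{K}$ (which are deterministic for fixed test point but involve only the training inputs), in $K(x_0,\bm{x})$, and in the plug-in KR predictor $\bar{h}(\hat{\pi}_{\mathcal{D}_{tr}};\cdot)$. Strong consistency of these objects, combined with dominated convergence under the integrability built into the Hadamard framework, yields $A_n \to 0$ almost surely.

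Combining the three bounds gives the stated strong consistency. The delicate step is the uniform-in-$z$ control in $A_n$; the other two pieces are standard once the Hadamard hypothesis and uniform integrability are in hand.
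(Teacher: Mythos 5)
Your decomposition into $A_n+B_n+C_n$ mirrors the paper's strategy: the paper also obtains $C_n\to 0$ from the CLT plus uniform integrability (via Skorokhod and Vitali convergence), and also disposes of the population-level average by the strong law of large numbers applied to $IF(z_i;T,\pi)$ and $IF^2(z_i;T,\pi)$, together with the same difference-of-squares factorization and Cauchy--Schwarz bound you propose. So two of your three terms are handled exactly as in the paper.

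The gap is in $A_n$, which you correctly identify as the crux but do not actually resolve. Your two proposed mechanisms are problematic. A Glivenko--Cantelli control over the class $\{IF(\cdot;T,P): P \text{ near } \pi\}$ is an additional entropy/bracketing hypothesis that is nowhere among the theorem's assumptions, and a perturbation analysis of the explicit formula in Theorem \ref{thm:IF} runs into the fact that $(\bm{K}+\lambda n\bm{I})^{-1}$ is an $n\times n$ matrix whose dimension grows with $n$, so ``strong consistency of these objects'' is not a routine statement. The paper instead extracts the needed uniform-in-$z$ control directly from the stated hypothesis: continuous $\rho_\infty$-Hadamard differentiability at $\hat{\pi}_{\mathcal{D}_{tr}}$ gives, for each $n$, a scale $t_n$ at which the difference quotient approximates $IF(z;T,\hat{\pi}_{\mathcal{D}_{tr}})$ uniformly in $z$; continuity of the derivative at $\pi$ together with $\rho_\infty(\hat{\pi}_{\mathcal{D}_{tr}},\pi)\to 0$ a.s.\ lets the same difference quotient be compared to the G\^{a}teaux differential $L_\pi(\delta_z-\hat{\pi}_{\mathcal{D}_{tr}})$; and linearity of $L_\pi$ identifies this limit as the \emph{centered} quantity $IF(z;T,\pi)-\frac{1}{n}\sum_{i=1}^n IF(z_i;T,\pi)$, not $IF(z;T,\pi)$ itself. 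Your $A_n$ compares against the uncentered $IF(z_i;T,\pi)$; this happens to be harmless only because $\mathbb{E}_\pi[IF(Z;T,\pi)]=0$ makes the centering term vanish almost surely, a point your argument never surfaces. To repair the proof you would need to replace the appeal to a Glivenko--Cantelli condition with this difference-quotient argument (or supply and verify the uniform-entropy hypothesis separately), and track the centering explicitly before invoking the SLLN.
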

Theorem \ref{thm:IF} extends the influence function derivation of kernel ridge regression \citep{christmann2007consistency,debruyne2008model} to provide $IF(z; T, \hat{\pi}_{\mathcal{D}_{tr}})$ at any test point $x_0$. Note that the asymptotic variance \eqref{sigma} is expressed in terms of $IF(z; T, \pi)$, instead of $IF(z; T, \hat{\pi}_{\mathcal{D}_{tr}})$. Therefore, to fill this gap, Theorem \ref{thm:IF2} proves the convergence of $IF(z; T, \hat{\pi}_{\mathcal{D}_{tr}})$ to $IF(z; T, \pi)$ with the aid of continuous differentiability. To conclude, the estimator of the data variability using the influence function is given by
$
\text{Var} (\bar{h}(\hat{\pi}_{\mathcal{D}_{tr}};x_0))
\approx \frac{1}{n^2} \sum_{i=1}^{n} IF^2((x_i,y_i); T, \hat{\pi}_{\mathcal{D}_{tr}}). %\\  &+\frac{1}{m-1} \sum_{i=1}^m \left(h(\hat{\theta}_j, \hat{\pi}_{\mathcal{D}_{tr}};x_0)-\bar{h}_m(\hat{\pi}_{\mathcal{D}_{tr}};x_0)\right)^2
%\approx & \text{Var}( \mathbb{E}[h(\Gamma, \hat{\pi}_{\mathcal{D}_{tr}};x_0)|\hat{\pi}_{\mathcal{D}_{tr}}]) +\text{Var} (h(\Gamma, \pi;x_0))\\
$

%where $m$ is large to make the estimation accurate; see \eqref{equ:bias}. In addition, we can similarly estimate the variance of a deep ensemble predictor $\bar{h}_{m'}$ where $m'$ is small (in contradiction to the large $m$ in the sample average; see Section \ref{sec: DE}) by using Theorem \ref{thm:var}:
%\begin{align*}
%&\text{Var} (\bar{h}_{m'}(\bm{\hat{\theta}}, \hat{\pi}_{\mathcal{D}_{tr}};x_0))
%\approx \frac{1}{n^2} \sum_{i=1}^{n} IF^2((x_i,y_i); T, P) \\  &+\frac{1}{m'(m-1)} \sum_{i=1}^m \left(h(\hat{\theta}_j, \hat{\pi}_{\mathcal{D}_{tr}};x_0)-\bar{h}_m(\hat{\pi}_{\mathcal{D}_{tr}};x_0)\right)^2.
%\approx & \text{Var}( \mathbb{E}[h(\Gamma, \hat{\pi}_{\mathcal{D}_{tr}};x_0)|\hat{\pi}_{\mathcal{D}_{tr}}]) +\text{Var} (h(\Gamma, \pi;x_0))\\
%\end{align*}

%Finally, we remark that the square loss function could be extended to any twice differentiable convex loss function of $\hat{y}_i-y_i$ by the similar technique \citep{debruyne2008model} but corresponding NTK theory is not well-studied.
  \vspace{-0.5em} 
\subsection{Estimating Procedural Variance}
  \vspace{-0.5em} 
%Denote the population mean with respect to the procedural variability:  
%$$\bar{h}(\hat{\pi}_{\mathcal{D}_{tr}};x_0):=\mathbb{E} [(h(\Gamma, \text{Var} (\hat{\pi}_{\mathcal{D}_{tr}};x_0)|\hat{\pi}_{\mathcal{D}_{tr}}])$$
%This could be approximated by \eqref{equ:DE} if $m'$ is sufficiently large. Note that when we view the deep ensemble predictor as an improvement of a single model predictor, $m'$ could be a number as small as $5$ \citep{lakshminarayanan2017simple}. For variance estimation \eqref{equ:var}, however, $m$ should be large to make the estimation accurate, so we term it \textit{ensemble average/variance} and use $m$ to represent a large number in contradiction to the small $m'$ in the deep ensemble predictor \eqref{equ:DE}. 
%The former is used in for variance estimation \eqref{equ:var} so $m$ should be large. The latter is still one predictor as am improvement of the single network predictor and $m=5$ is usually used in previous work \citep{lakshminarayanan2017simple,pearce2018high}.
%Now we use the ensemble average to estimate the conditional expectation and variance in \eqref{equ:var}. 

For the procedural variance (the second component in \eqref{equ:var}), we estimate with the \textit{ensemble variance}:
\begin{eqnarray}  \label{equ:secondcomponent}
\tau^2&:=&\mathbb{E}[\text{Var} (h(\Gamma, \hat{\pi}_{\mathcal{D}_{tr}};x_0)|\hat{\pi}_{\mathcal{D}_{tr}})]\approx %\\
%&\approx&
\text{Var} (h(\Gamma, \hat{\pi}_{\mathcal{D}_{tr}};x_0)|\hat{\pi}_{\mathcal{D}_{tr}}) %\nonumber 
\\
&\approx&\frac{1}{m-1} \sum_{i=1}^m \left(h_{\hat{\theta}_i}(x_0)-\bar{h}_m(x_0)\right)^2:=\text{EV}.\nonumber
\end{eqnarray}
%Besides, we can roughly treat 
%\begin{align*}
%\mathbb{E}[\text{Var} (h(\Gamma, \hat{\pi}_{\mathcal{D}_{tr}};x_0)|\hat{\pi}_{\mathcal{D}_{tr}})] \approx \text{Var} (h(\Gamma, \hat{\pi}_{\mathcal{D}_{tr}};x_0)|\hat{\pi}_{\mathcal{D}_{tr}})
%\end{align*}
The first approximation above follows since, when viewing $\text{Var} (h(\Gamma, \hat{\pi}_{\mathcal{D}_{tr}};x_0)|\hat{\pi}_{\mathcal{D}_{tr}})$ as a smooth function of $\hat{\pi}_{\mathcal D_{tr}}$, this quantity follows a central limit theorem which gives a fluctuation of order $1/\sqrt n$. This results in an approximation error $O_p(1/\sqrt n)$ which is negligible compared to the magnitude of $\text{Var} (h(\Gamma, \hat{\pi}_{\mathcal{D}_{tr}};x_0)|\hat{\pi}_{\mathcal{D}_{tr}})$ itself. The second approximation holds readily when $m$ is large enough. Otherwise, even for small $m$, we could still use
$$\left[\frac{\text{EV}}{\chi^2_{m-1,1-\alpha/2}/(m-1)},\frac{\text{EV}}{\chi^2_{m-1,\alpha/2}/(m-1)}\right]$$
as a $(1-\alpha)$-level confidence interval (CI) for $\mathbb{E}[\text{Var} (h(\Gamma, \hat{\pi}_{\mathcal{D}_{tr}};x_0)|\hat{\pi}_{\mathcal{D}_{tr}})]$, where $\chi^2_{m-1,\kappa}$ denotes the $\kappa$-th quantile of the $\chi^2$-distribution with degree of freedom $m-1$. This is assuming that the procedural variability is Gaussian and thus we can use the $\chi^2$-distribution for inference on variance. The latter is related to the batching method that we discuss next. %More rigorously, we have
%\begin{theorem} \label{thm:EV}
%Suppose $\mathbb{E}[h(\Gamma, \hat{\pi}_{\mathcal{D}_{tr}};x_0)^2]< \infty$ and the statistical functional $T_2: P\to \text{Var} (h(\Gamma, P;x_0)|P)$ is weak continuous. Then we have the strong consistency:
%$$\lim_{n\to \infty} \lim_{m\to \infty} EV=\tau^2, \ a.s.$$
%\end{theorem}

% is because the bias from the training data is generally small compared with the variance from the training data, especially when we have a large number of training data. %{\color{red} NEED REFERENCES.}

  \vspace{-0.5em} 
\section{Batching} \label{sec: batch}
  \vspace{-0.5em} 
In this section, we propose an alternative approach, based on \textit{batching}  \citep{Glynn1990simulation,Schmeiser1982batch,schruben1983confidence}, to avoid the Gram matrix inversion in the influence function calculation. Originating from simulation analysis, the key idea of batching is to construct a self-normalizing $t$-statistic that ``cancels out" the unknown variance, leading to a valid confidence interval without explicitly needing to compute the variance. It can be used to conduct inference on serially dependent simulation outputs where the standard error is difficult to analytically compute. Previous studies have demonstrated the effectiveness of batching on the use of inference for Markov chain Monte Carlo \citep{geyer1992practical,flegal2010batch,jones2006fixed} and also the so-called input uncertainty problem \citep{glynn2018constructing}. 

Our approach leverages the self-normalizing idea in batching to construct an epistemic variance estimator via the $\chi^2$-statistic, especially in the context of deep ensemble. We divide the data into several batches of equal size, say $\mathcal{D}_1, ... , \mathcal{D}_K$, where $K$ is a constant and $|\mathcal{D}_i|=l=\frac{n}{K}$, $i\in [K]$. With a random initialization $\Gamma_i$, we train the neural network on batch $\mathcal{D}_i$ once, and obtain the output $\hat{\psi}_i(x_0):= h(\Gamma_i, \hat{\pi}_{\mathcal{D}_i};x_0)$ where $i\in [K]$. We show that, perhaps surprisingly, it is easy to obtain the epistemic variance of a deep ensemble predictor by using the $K$ batched predictors.

Recall in Section \ref{sec: IF} we have argued that $T(\hat{\pi}_{\mathcal{D}_{tr}})\approx \mathcal{N}(T(\pi),\sigma^2/n)$. Suppose further that the training noise is also approximate Gaussian, independent of $\hat{\pi}_{\mathcal{D}_{tr}}$ (recall from Theorem \ref{prop:decomposition} that the uncertainty sources are all uncorrelated). In other words, we assume that
\begin{equation}
h(\Gamma,\hat{\pi}_{\mathcal{D}_{tr}};x_0)\stackrel{d}{=} h_{\theta^*}(x_0)+\left(\frac{\sigma}{\sqrt n}Z+\tau W+ B_0\right)(1+o_p(1))\label{normal}
\end{equation}
where $B_0=\mathbb{E}[\text{UQ}_{PV}|\hat{\pi}_{\mathcal{D}_{tr}}]$ is the systematic bias from procedural variability, $Z$, $W$ are two independent $\mathcal N(0,1)$, and the $o_p(1)$ term is as the data size $n$ and potentially a parameter controlling the procedural noise (such as training time) go to $\infty$. To enhance credibility on this assumption, we have empirically verified it with our experiments; see Section \ref{sec:exp}.
Thus we would obtain, for a deep ensemble predictor $\bar{h}_{m'}(x_0)$,
\begin{align}
&\bar{h}_{m'}(x_0)\stackrel{d}{=}h_{\theta^*}(x_0)+\left(\frac{\sigma}{\sqrt n}Z+\frac{\tau}{\sqrt{m'}}W+B_0\right)(1+o_p(1))\stackrel{d}{=}h_{\theta^*}(x_0)+(\nu Y+B_0)(1+o_p(1))\label{normal ensemble}
\end{align}
where $Y\sim \mathcal{N}(0,1)$ and $\nu=\sqrt{\sigma^2/n+\tau^2/m'}$. This matches Theorem \ref{thm:var} which gives
$$\text{Var} (\bar{h}_{m'}(x_0))=\left(\frac{\sigma^2}{n}+\frac{\tau^2}{m'}\right)(1+o(1))$$
%where arguing the last equality needs to assume $\tau^2(\hat P)\stackrel{p}{\to}\tau^2(P)$ as $n\to\infty$ and $\tau^2(\hat P)$ is uniformly integrable. 
where we would have
$$\mathbb{E}[\text{Var} (h(\Gamma, \hat{\pi}_{\mathcal{D}_{tr}};x_0)|\hat{\pi}_{\mathcal{D}_{tr}})]\approx \tau^2, \ \text{Var} (\mathbb{E}[h(\Gamma, \hat{\pi}_{\mathcal{D}_{tr}};x_0)|\hat{\pi}_{\mathcal{D}_{tr}}])\approx \frac{\sigma^2}{n}.$$
Now for the batched predictors $\hat\psi_i(x_0)$, we have similarly
\begin{equation}
\hat\psi_i(x_0)\stackrel{d}{=}h_{\theta^*}(x_0)+\left(\frac{\sigma}{\sqrt l}Z_i+\tau W_i+B_0\right)(1+o_p(1))\label{normal batch}
\end{equation}
where $Z_i,W_i,i\in [K]$ are independent $\mathcal N(0,1)$.  To estimate $\text{Var}(\bar{h}_{m'}(x_0))$, we set $K=m'$ so that each batched predictor satisfies
\begin{equation}
\hat\psi_i(x_0)\stackrel{d}{=}h_{\theta^*}(x_0)+(\sqrt{m'}\nu Y_i+B_0)(1+o_p(1))\label{normal batch equal}
\end{equation}
Note that the RHS in \eqref{normal batch equal} differs from that of \eqref{normal ensemble} only by the $\sqrt{m'}$ factor in front of the $Y$. This turns out to enable the construction of a self-normalizing statistic and in turn an estimate for $\sigma^2/n+\tau^2/m'$, as detailed below:
\begin{theorem}  \label{DE main}
Suppose a single model predictor $h(\Gamma,\hat{\pi}_{\mathcal{D}_{tr}};x_0)$ satisfies
\eqref{normal} and the batched predictors satisfy \eqref{normal batch}.
% $$h(\Gamma,\hat{\pi}_{\mathcal{D}_{tr}};x_0)=h_{\theta^*}(x_0)+\left(\frac{\sigma}{\sqrt n}Z+\tau W\right)(1+o_p(1))$$
% where $Z,W$ are independent $\mathcal{N}(0,1)$ controlling the randomness in the data and training, and $o_p(1)$ is as the data size $n$ and potentially a parameter controlling the training noise go to $\infty$. Analogously, suppose that 
% the batched predictors satisfy
% $$\hat\psi_i(x_0)=h_{\theta^*}(x_0)+\left(\frac{\sigma}{\sqrt l}Z_i+\tau W_i\right)(1+o_p(1))$$
% where $Z_i,W_i,i\in [K]$ are independent $\mathcal{N}(0,1)$. 
% Then, consider the deep ensemble predictors $\bar{h}_{m'}(x_0)$. 
Then, choosing the number of batches $K=m'$ (assumed fixed), and letting $\bar\psi(x_0)=\frac{1}{m'}\sum_{i=1}^{m'}\hat\psi_i(x_0)$ and $S^2=\frac{1}{m'-1}\sum_{i=1}^{m'}(\hat\psi_i(x_0)-\bar\psi(x_0))^2$ be the sample mean and variance of the batched predictors, we have
% \begin{equation}
% \frac{\bar\psi(x_0)-h_{\theta^*}(x_0)}{S/\sqrt {m'}}\Rightarrow t_{m'-1}\label{batching DE}
% \end{equation}
% \begin{equation}
% \frac{\bar{h}_{m'}(x_0)-h_{\theta^*}(x_0)}{S/\sqrt {m'}}\Rightarrow t_{m'-1}\label{sectioning DE}
% \end{equation}
$
\frac{S^2/{m'}}{\sigma^2/n+\tau^2/m'}\Rightarrow\frac{\chi^2_{m'-1}}{m'-1}\label{var est DE}
$
where $\Rightarrow$ denotes convergence in distribution. Therefore,
% \begin{equation}
% \left[\bar\psi(x_0)-t_{m'-1,1-\alpha/2}\frac{S}{\sqrt {m'}},\bar\psi(x_0)+t_{m'-1,1-\alpha/2}\frac{S}{\sqrt {m'}}\right]\label{batching CI DE}
% \end{equation}
% and
% \begin{equation}
% \left[\bar{h}_{m'}(x_0)-t_{m'-1,1-\alpha/2}\frac{S}{\sqrt {m'}},\bar{h}_{m'}(x_0)+t_{m'-1,1-\alpha/2}\frac{S}{\sqrt {m'}}\right]\label{sectioning CI DE}
% \end{equation}
% are asymptotically exact $(1-\alpha)$-level confidence intervals (CIs) for $h_{\theta^*}(x_0)$, where $t_{m'-1,1-\alpha/2}$ is the $(1-\alpha/2)$ quantile of $t_{m'-1}$, and
\begin{equation}
\left[\frac{S^2/m'}{\chi^2_{m'-1,1-\alpha/2}/(m'-1)},\frac{S^2/m'}{\chi^2_{m'-1,\alpha/2}/(m'-1)}\right]\label{var CI DE}
\end{equation}
is an asymptotically exact $(1-\alpha)$-level confidence interval (CI) for $\frac{\sigma^2}{n}+\frac{\tau^2}{m'}$, the asymptotic epistemic variance of the deep ensemble estimator.
\end{theorem}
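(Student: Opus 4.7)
The plan is to exploit the self-normalizing structure of a $\chi^2$-statistic built from the $m'$ batched predictors, observing that both $h_{\theta^*}(x_0)$ and the systematic bias $B_0$ are shared across all batches and therefore cancel out in any centered quantity such as $\hat\psi_i(x_0)-\bar\psi(x_0)$. This cancellation is precisely why batching yields a valid variance estimator without needing to know or estimate the bias. The choice $K=m'$ is what makes the scheme dovetail with the target: each batch uses $l=n/m'$ points, so a batched predictor has asymptotic variance $\sigma^2/l+\tau^2=m'(\sigma^2/n+\tau^2/m')=m'\nu^2$, which is exactly $m'$ times the ensemble variance we want to estimate.

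First I will start from the representation \eqref{normal batch equal}. With $K=m'$ the batched predictors $\hat\psi_1(x_0),\ldots,\hat\psi_{m'}(x_0)$ are, up to $(1+o_p(1))$, iid normal with common mean $h_{\theta^*}(x_0)+B_0$ and common variance $m'\nu^2$. Independence across $i$ follows because the batches $\mathcal{D}_i$ are disjoint (making the data components $Z_i$ independent) and the initializations $\Gamma_i$ are drawn independently (making the procedural components $W_i$ independent), together with the uncorrelation of procedural and data variability posited at the beginning of Section \ref{sec: batch}.

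Next, I substitute \eqref{normal batch equal} into $\bar\psi(x_0)$ and $S^2$. Writing $\bar Y=\frac{1}{m'}\sum_i Y_i$, the centered residuals collapse to
\[
\hat\psi_i(x_0)-\bar\psi(x_0)=\sqrt{m'}\,\nu\,(Y_i-\bar Y)(1+o_p(1)),
\]
so that
\[
\frac{S^2/m'}{\nu^2}=\frac{1}{m'-1}\sum_{i=1}^{m'}(Y_i-\bar Y)^2\,(1+o_p(1)).
\]
By the classical Fisher/Cochran fact that $\sum_{i=1}^{m'}(Y_i-\bar Y)^2\sim\chi^2_{m'-1}$ exactly for iid $\mathcal N(0,1)$, Slutsky's lemma absorbs the $(1+o_p(1))$ factor and gives $(S^2/m')/\nu^2\Rightarrow\chi^2_{m'-1}/(m'-1)$ as $n\to\infty$ with $m'$ fixed. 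Inverting this pivot produces the interval \eqref{var CI DE}, and since by Theorem \ref{thm:var} the quantity $\nu^2=\sigma^2/n+\tau^2/m'$ equals the asymptotic epistemic variance of the deep ensemble estimator, the interval is asymptotically exact at level $1-\alpha$.

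The main obstacle is the careful handling of the $(1+o_p(1))$ factors. Since $m'$ is held fixed, there is no outer averaging that smooths these error terms, so I must verify that after squaring, summing, and normalizing $m'$ separately perturbed Gaussians the $\chi^2_{m'-1}$ limit survives rather than being contaminated. A secondary but cleanly dispatched point is the joint (not merely marginal) CLT across batches: one needs that as $l=n/m'\to\infty$ the representations \eqref{normal batch} hold jointly in $i$ with independent $(Z_i,W_i)$, which follows from the disjointness of batches and the independence of the procedural seeds $\Gamma_i$.
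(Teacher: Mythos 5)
Your proposal is correct and follows essentially the same route as the paper's own proof: write each batched predictor as $h_{\theta^*}(x_0)+(\sqrt{m'}\nu Y_i+B_0)(1+o_p(1))$ with i.i.d.\ standard normal $Y_i$, note that centering cancels $h_{\theta^*}(x_0)$ and $B_0$, apply the exact $\chi^2_{m'-1}$ distribution of $\sum_i(Y_i-\bar Y)^2$ together with the continuous mapping/Slutsky theorem, and invert the resulting pivot to get the CI. The delicate point you flag about propagating the $(1+o_p(1))$ factors through the centered, squared, and summed residuals is handled no more rigorously in the paper itself, so your treatment matches the paper's level of detail.
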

%In Theorem \ref{DE main}, \eqref{batching DE}, \eqref{sectioning DE} and \eqref{var est DE} follow the same line of proof as Theorem \ref{batching main}. but not statistically consistent in theory (which is typically small) 
Theorem \ref{DE main} shows that with the same number of batched predictors as the training runs used in the deep ensemble, we can construct an accurate-coverage CI for its epistemic variance. Note that by virtue of \eqref{var CI DE} we could also simply use $\frac{S^2}{m'}$ as a rough estimate of $\text{Var}(\bar{h}_{m'}(x_0))$, which is accurate when $m'$ is moderate. We remark that Theorem \ref{DE main} indicates a non-trivial, possibly surprising result: The correct approach to estimate the variance of deep ensemble predictor is to train a \textbf{single, rather than ensemble} neural network predictor in each batch. If we train an ensemble predictor in each batch, we would actually \emph{under-estimate} the uncertainty of the original deep ensemble. Moreover, this result also suggests that batching has an advantageously light computation load, as it only requires the same number of training runs as the original deep ensemble predictor and does not involve any matrix calculation.

\section{Summary and Combining Methods}\label{sec:combine}
%\vspace{-0.5em} 
%For a single model predictor, it is still possible to avoid the Gram matrix inversion by using
In this section, we provide a summary of our proposals on epistemic variance estimation in \eqref{equ:var}:\\
a) IF: $\frac{\sigma^2}{n}$ \eqref{sigma} can be estimated by the influence function, which does not require network training but needs to evaluate the NTK Gram matrix:
$\text{Var} (\bar{h}(\hat{\pi}_{\mathcal{D}_{tr}};x_0))\approx \frac{1}{n^2} \sum_{i=1}^{n} IF^2((x_i,y_i); T, \hat{\pi}_{\mathcal{D}_{tr}}).$\\
b) EV: $\tau^2$ \eqref{equ:secondcomponent} can be estimated by the ensemble variance of $m$ networks:
$\tau^2 \approx \frac{1}{m-1} \sum_{i=1}^m \left(h_{\hat{\theta}_j}(x_0)-\bar{h}_m(x_0)\right)^2.$\\
c) BA: $\frac{\sigma^2}{n}+\frac{\tau^2}{m'}$ can be estimated by the batched predictors which only requires $m'$ times of network training:
$\text{Var}(\bar{h}_{m'}(x_0))\approx \frac{S^2}{m'}.$

We provide some details on how to select estimation methods appropriately. Note that any two estimators from a)-c) give rise to separate estimates of both components in \eqref{equ:var}, so it is sufficient to choose any two of them.
%Any two estimators from a)-c) can give rise to separate estimates of both components in \eqref{equ:var}. 
%A small value of $m$ or $m'$ produces less accurate point estimates of the variances but accurate-coverage CIs. If consistent point estimates of the variances are desired, we can increase $m$ to a large number, rather than $m'$, since doing so would lead to a small batch size $l$ which in turn deteriorates the required Gaussian approximation underpinning the batching method. %Hence, we recommend the use of influence function approach on small datasets as the Gram matrix inversion will not be computationally expensive, while for large datasets, we suggest batching so that the estimation will not be degraded by the dataset dividing.

a)+b): This combination involves matrix calculation in IF and $m$ training runs in building an ensemble in EV. $m$ is determined by the learner: A smaller $m$ can reduce the computational workload but produces less accurate point estimates of the variances. This combination is recommended when the Gram matrix inversion is not computationally expensive, either on small datasets, or on large datasets but with a strategy to reduce the computational cost of the Gram matrix inversion \cite{zhang2015divide}. 

a)+c): This combination involves matrix calculation in IF and $m'$ batched training in BA. $m'$ cannot be too large or too small compared with the size of the dataset: If we increase $m'$ to a large number, then in each batch, we will have a small batch size $l$, which in turn deteriorates the required Gaussian approximation underpinning the batching method. On the other hand, if we decrease $m'$ to a small number, BA produces less accurate point estimates of the variances (but still accurate-coverage CIs). This combination is recommended when the Gram matrix inversion is not computationally expensive, either on small datasets with small $m'$, or on large datasets but with a strategy to reduce the computational cost of the Gram matrix inversion \cite{zhang2015divide}. 

b)+c): This combination involves $m$ training runs in building an ensemble in EV and $m'$ batched training in BA. This combination can avoid matrix inversion calculation and is recommended for large datasets where the BA estimation will not be degraded by the dataset dividing.

\section{Experiments} \label{sec:exp}
%\vspace{-0.5em} 
% The ground truth variance of real-world data is unable to obtain due to the finite number of samples. Therefore,  to quantitatively evaluate our approaches
In this section, we conduct evaluations to demonstrate the effectiveness of our approaches: ensemble variance, influence function, and batching on the epistemic variance estimation \eqref{equ:var}. Since this is the first study that aims to quantify different sources of epistemic uncertainty, we do not find any off-the-shelf algorithms for baseline comparisons. Thus, to fully evaluate our model, we conduct comprehensive experiments on both synthetic datasets and benchmark real-world datasets with different problem settings. We will make our code publicly available.
  %Moreover, to the best of knowledge, our proposals are the first approach that is able to quantify procedural and data variability separately so there are not any off-the-shelf baseline algorithms.

Our approach does not require \textit{any} prior knowledge of the underlying generative process and this knowledge is only used for ground-truth calculation on synthetic datasets. In each experiment on synthetic datasets, the ground-truth variance is empirically calculated with $J = 100$, which is a sufficiently large number of repeated trials leading to stable estimations. In each repetition $j\in [J]$, we generate a new training dataset from the same synthetic distribution and re-train the single predictor/deep ensemble predictor to obtain the prediction $h_j(x_0)$. Then, the ground-truth variance is calculated by the following equation: $\frac{1}{J-1}\sum_{j=1}^J \left(h_j(x_0) - \frac{1}{J}\sum_{i=1}^J h_i(x_0)\right)^2$. %(THE FORMULAS FOR ESTIMATING THE GROUND TRUTHS ARE ACTUALLY DIFFERENT FROM WHAT I THOUGHT, MAYBE DOUBLE CHEKC THEY ARE CORRECT? ALSO, WITH ONLY 100 EXPERIMENTAL REPETITIONS, WOULDN'T THE ESTIMATED GROUND TRUTH BE NOISY?) 
By doing this, we can obtain the ground-truth data variance and procedural variance separately; See Section \ref{sec:expmore1}.

\subsection{Synthetic Datasets} \label{sec:synthetic}

%We conduct experiments on the following synthetic distribution: $X \sim \text{Unif}([0,1]^d)$ and  $Y \sim \sum_{i=1}^d \sin(X^{(i)}) + \mathcal{N}(0,(0.1d)^2)$. The training set $\mathcal{D}_{tr}=\{(x_i,y_i): i=1,...,n\}$ are formed by i.i.d. samples of $(X,Y)$ with sample size $n=200$. We use $x_0=(0.5,...,0.5)$ as the fixed test point and set $K=m'=5$, $m=50$. We implement a two-layer fully-connected NN with $1024$ hidden neurons as the baseline predictor based on the NTK specification in Section \ref{sec: NTK} and it is trained using the regularized square loss \eqref{equ:lossnn}.
\begingroup
\tabcolsep = 0.3pt
\begin{table}[!ht] 
\scriptsize
  \centering
  \begin{tabular}{c|ccc|ccc|ccc}
   \toprule
       & \multicolumn{3}{|c|}{$n=200$} & \multicolumn{3}{|c|}{$n=500$} & \multicolumn{3}{|c}{$n=1000$}\\
      & $\tau^2$ GT &  $\tau^2$ EV & $\tau^2$ Diff & $\tau^2$ GT &  $\tau^2$ EV & $\tau^2$ Diff & $\tau^2$ GT &  $\tau^2$ EV & $\tau^2$ Diff\\
            \midrule
      $d=2$ & $0.5\cdot 10^{-4}$ &  $0.4\cdot 10^{-4}$ &  $-0.1\cdot 10^{-4}$ & $0.4*10^{-4}$ &  $0.3*10^{-4}$ &  $-0.1*10^{-4}$ & $0.4*10^{-4}$ &  $0.3*10^{-4}$ &  $-0.1*10^{-4}$\\
      $d=4$ & $1.6\cdot 10^{-4}$ &  $1.5\cdot 10^{-4}$ &  $-0.1\cdot 10^{-4}$ & $1.6*10^{-4}$ &  $1.5*10^{-4}$ &  $-0.1*10^{-4}$ & $1.5*10^{-4}$ &  $1.4*10^{-4}$ &  $-0.1*10^{-4}$\\
      $d=8$ & $9.3\cdot 10^{-4}$ &  $9.1\cdot 10^{-4}$ &  $-0.2\cdot 10^{-4}$ & $9.2*10^{-4}$ &  $9.0*10^{-4}$ &  $-0.2*10^{-4}$ & $9.2*10^{-4}$ &  $8.9*10^{-4}$ &  $-0.3*10^{-4}$\\
      $d=16$ & $2.1\cdot 10^{-3}$ &  $2.0\cdot 10^{-3}$ &  $-0.1\cdot 10^{-3}$ & $2.1*10^{-3}$ &  $2.0*10^{-3}$ &  $-0.1*10^{-3}$ & $2.0*10^{-3}$ &  $1.9*10^{-3}$ &  $-0.1*10^{-3}$\\
      $d=32$ & $3.7\cdot 10^{-3}$ &  $3.5\cdot 10^{-3}$ &  $-0.2\cdot 10^{-3}$ & $3.7*10^{-3}$ &  $3.6*10^{-3}$ &  $-0.1*10^{-3}$ & $3.6*10^{-3}$ &  $3.3*10^{-3}$ &  $-0.3*10^{-3}$\\
      \midrule
      &  $\frac{\sigma^2}{n}$ GT & $\frac{\sigma^2}{n}$ IF & $\frac{\sigma^2}{n}$ Diff &  $\frac{\sigma^2}{n}$ GT & $\frac{\sigma^2}{n}$ IF & $\frac{\sigma^2}{n}$ Diff &  $\frac{\sigma^2}{n}$ GT & $\frac{\sigma^2}{n}$ IF & $\frac{\sigma^2}{n}$ Diff \\ 
            \midrule
      $d=2$ & $1.6\cdot 10^{-4}$ & $1.8\cdot 10^{-4}$ & $+0.2\cdot 10^{-4}$ & $0.7*10^{-4}$ & $0.9*10^{-4}$ & $+0.2*10^{-4}$ & $0.4*10^{-4}$ & $0.6*10^{-4}$ & $+0.2*10^{-4}$\\
      $d=4$ & $3.3\cdot 10^{-4}$ & $3.5\cdot 10^{-4}$ & $+0.2\cdot 10^{-4}$ & $1.4*10^{-4}$ & $1.8*10^{-4}$ & $+0.4*10^{-4}$ & $0.8*10^{-4}$ & $1.0*10^{-4}$ & $+0.2*10^{-4}$\\
      $d=8$ & $4.7\cdot 10^{-4}$ & $4.8\cdot 10^{-4}$  & $+0.1\cdot 10^{-4}$ & $2.1*10^{-4}$ & $2.4*10^{-4}$  & $+0.3*10^{-4}$ & $1.3*10^{-4}$ & $1.6*10^{-4}$  & $+0.3*10^{-4}$\\
      $d=16$ & $0.7\cdot 10^{-3}$ & $0.8\cdot 10^{-3}$  & $+0.1\cdot 10^{-3}$ & $0.3*10^{-3}$ & $0.4*10^{-3}$  & $+0.1*10^{-3}$ & $0.2*10^{-3}$ & $0.3*10^{-3}$  & $+0.1*10^{-3}$\\
      $d=32$ & $1.5\cdot 10^{-3}$ & $1.7\cdot 10^{-3}$  & $+0.2\cdot 10^{-3}$ & $0.7*10^{-3}$ & $0.8*10^{-3}$  & $+0.1*10^{-3}$ & $0.4*10^{-3}$ & $0.4*10^{-3}$  & $+0.0*10^{-3}$\\
      \midrule
      & $\frac{\sigma^2}{n}+\frac{\tau^2}{m'}$ GT & $\frac{\sigma^2}{n}+\frac{\tau^2}{m'}$ BA & $\frac{\sigma^2}{n}+\frac{\tau^2}{m'}$ Diff & $\frac{\sigma^2}{n}+\frac{\tau^2}{m'}$ GT & $\frac{\sigma^2}{n}+\frac{\tau^2}{m'}$ BA & $\frac{\sigma^2}{n}+\frac{\tau^2}{m'}$ Diff & $\frac{\sigma^2}{n}+\frac{\tau^2}{m'}$ GT & $\frac{\sigma^2}{n}+\frac{\tau^2}{m'}$ BA & $\frac{\sigma^2}{n}+\frac{\tau^2}{m'}$ Diff\\
            \midrule
      $d=2$ & $1.7\cdot 10^{-4}$ & $1.7\cdot 10^{-4}$ & $+0.0\cdot 10^{-4}$ & $0.8*10^{-4}$ & $1.0*10^{-4}$ & $+0.2*10^{-4}$ & $0.5*10^{-4}$ & $0.5*10^{-4}$ & $+0.0*10^{-4}$\\
      $d=4$ & $3.6\cdot 10^{-4}$ & $3.8\cdot 10^{-4}$ & $+0.2\cdot 10^{-4}$ & $1.7*10^{-4}$ & $1.9*10^{-4}$ & $+0.2*10^{-4}$ & $1.1*10^{-4}$ & $1.4*10^{-4}$ & $+0.3*10^{-4}$\\
      $d=8$ & $6.6\cdot 10^{-4}$ & $6.8\cdot 10^{-4}$  & $+0.2\cdot 10^{-4}$ & $3.9*10^{-4}$ & $4.1*10^{-4}$  & $+0.2*10^{-4}$ & $3.1*10^{-4}$ & $3.3*10^{-4}$  & $+0.2*10^{-4}$\\
      $d=16$ & $1.1\cdot 10^{-3}$ & $1.2\cdot 10^{-3}$  & $+0.1\cdot 10^{-3}$ & $0.7*10^{-3}$ & $0.8*10^{-3}$  & $+0.1*10^{-3}$ & $0.6*10^{-3}$ & $0.7*10^{-3}$  & $+0.1*10^{-3}$\\
      $d=32$ & $2.2\cdot 10^{-3}$ & $2.3\cdot 10^{-3}$  & $+0.1\cdot 10^{-3}$ & $1.4*10^{-3}$ & $1.6*10^{-3}$  & $+0.2*10^{-3}$ & $1.1*10^{-3}$ & $1.2*10^{-3}$  & $+0.1*10^{-3}$\\
    \bottomrule
  \end{tabular}
  \caption{Epistemic Variance Estimation on Synthetic Datasets with $n=200, 500, 1000$ with Different Dimensions.}
 \label{results_all}  
\end{table}

We conduct experiments on the following synthetic distribution: $X \sim \text{Unif}([0,0.2]^d)$ and 
$Y \sim \sum_{i=1}^d \sin(X^{(i)}) + \mathcal{N}(0,0.1^2)$ with multiple dimension settings $d=2,4,8,16,32$ to study the dimensionality effect on the epistemic uncertainty. The training set $\mathcal{D}_{tr}=\{(x_i,y_i): i=1,...,n\}$ is formed by i.i.d. samples of $(X,Y)$ with sample size $n$ taking values in $200, 500, 1000$. We use $x_0=(0.1,...,0.1)$ as the fixed test point and set $K=m'=5$, $m=50$. We implement a two-layer fully-connected NN with $1024$ hidden neurons as the baseline predictor according to the NTK specification in Section \ref{sec: NTK} and it is optimized by the regularized square loss \eqref{equ:lossnn}. 

Tables \ref{results_all} %\ref{results4}, \ref{results5}, \ref{results6} 
report the ground-truth (GT) variance and estimates from our proposed ensemble variance (EV), influence function (IF), and batching (BA). As a primary observation, the estimated variances from our methods appear highly consistent with the ground truth on all experiments with different settings, which evidently demonstrates the robustness and effectiveness of our proposals. In addition, we observe that:\\  
1) The procedural variance seems almost uncorrelated with the size of training data, while the data variance reduces roughly at the rate $n^{-1}$ when the size of training data increases. This observation is consistent with our \eqref{normal}.\\
2) Both procedural variance and data variance increase with the dimension of input data. In particular, the procedural variance increases much faster than the data variance. This is probably caused by the increasing number of minima in the networks along with the increasing number of parameters. This suggests that for large dataset, deep ensemble should be applied to reduce the procedural variance to boost performance.\\

\vspace{-1.0em}
\subsection{Real-World Datasets} \label{sec:realworld}
Finally, we conduct a series of experiments on the following widely-used benchmark real-world datasets: Boston, Concrete, Energy, Wine, and Yacht \citep{hernandez2015probabilistic,gal2016dropout,lakshminarayanan2017simple,rosenfeld2018discriminative,pearce2018high,zhu2019hdi}. The test point is chosen to be the mean feature vector. It is impossible to obtain the ground truth of epistemic variance on real-world datasets and hence, in this section, we focus on the implications of our estimates. 

\newlength{\oldintextsep}
\setlength{\oldintextsep}{\intextsep}
\setlength\intextsep{20pt}
\begin{wraptable}{r}{0pt}
\scriptsize
\centering  
  \begin{tabular}{c|c|c|c}
    \toprule
      & $\tau^2$ EV &  $\frac{\sigma^2}{n}$ IF & $\frac{\sigma^2}{n}+\frac{\tau^2}{m'}$ BA\\
            \midrule
      Boston & $9.1*10^{-3}$ &  $0.2*10^{-3}$ &  $2.3*10^{-3}$\\
      Concrete & $1.3*10^{-3}$ &  $0.4*10^{-3}$ &  $0.7*10^{-3}$\\
      Energy & $10.8*10^{-3}$ &  $0.2*10^{-3}$ &  $2.5*10^{-3}$\\
      Wine & $2.0*10^{-3}$ &  $0.1*10^{-3}$ &  $0.6*10^{-3}$\\
      Yacht & $1.3*10^{-3}$ &  $0.1*10^{-3}$ &  $0.5*10^{-3}$\\
    \bottomrule
  \end{tabular}
  \caption{Epistemic Variance Estimation on Real-World Datasets.\\}
 \label{results7}  
\end{wraptable} 

Table \ref{results7} presents the results of our proposed approaches on the benchmark datasets. As shown, on real-world datasets, the procedural variance is the predominant component in epistemic uncertainty. Compared with the synthetic datasets in Section \ref{sec:synthetic} where we add additional data noise on the labels, these benchmark datasets seem to have less noise on the ground-truth labels, and thus have less data uncertainty. Hence, techniques which efficiently reduce the procedural variance (such as deep ensemble), can bring considerable improvements on the model performance. This also provides a theoretical justification of the findings that deep ensemble performs similarly as combining with Bagging methods in previous work \citep{lakshminarayanan2017simple}.
\vspace{-0.5em} 
\section{Concluding Remarks}
\vspace{-0.5em} 
In this paper, we propose the first theoretical framework to dissect epistemic uncertainty in deep learning, with concrete justification and estimation methods for each component. By quantifying different constituents of epistemic uncertainty, our approaches can provide guidance on modeling and data collection to improve training. The evaluations on synthetic datasets fully corroborate our theory. Moreover, from the experiments in real-world datasets, we observe that the procedural variability is most likely to be the predominant component, which suggests the use of ensemble methods to boost model performance. Finally, quantifying epistemic uncertainty can also be very useful for real-world applications such as online learning problems, which is left as an interesting direction of future work.

%\begin{ack}
%Do {\bf not} include this section in the anonymized submission, only in the final paper. You can use the \texttt{ack} environment provided in the style file to autmoatically hide this section in the anonymized submission.
%\end{ack}

\medskip

\bibliographystyle{abbrv}
\bibliography{example_paper}

%%%%%%%%%%%%%%%%%%%%%%%%%%%%%%%%%%%%%%%%%%%%%%%%%%%%%%%%%%%%

%%%%%%%%%%%%%%%%%%%%%%%%%%%%%%%%%%%%%%%%%%%%%%%%%%%%%%%%%%%%

\newpage

\begin{appendices}

We provide further results and discussions in this supplementary material. Section \ref{sec: other} discusses aleatoric uncertainty and predictive uncertainty. Section \ref{sec: NTK} discusses recent NTK work regrading the equivalence between the NN predictor and KR predictor. Section \ref{sec: proofs} presents the missing proofs for results in the main paper. Section \ref{sec:expmore} presents experimental details and more experimental results. %Section \ref{secE} provides detailed guidance on approach selection.

\section{Other Types of Uncertainty} \label{sec: other}

We present an additional discussion on other types of uncertainty for the sake of completeness. Section \ref{sec:aleatoric} presents aleatoric uncertainty and Section \ref{sec:predictive} presents predictive uncertainty.

\subsection{Aleatoric Uncertainty} \label{sec:aleatoric}
We note that if we could remove all the epistemic uncertainty, i.e., letting $\text{UQ}_{EU}=0$, the best predictor we could get is the Bayes predictor $f^*$. However, $f^*$, as a point estimator, is not able to capture the randomness in $\pi_{Y|X}$ if it is not a point mass distribution.

The easiest way to think of aleatoric uncertainty is that it is captured by $\pi_{Y|X}$. At the level of the realized response or label value, this uncertainty is represented by
$$\text{UQ}_{AU}=y-f^*(x)$$

If the connection between $X$ and $Y$ is non-deterministic, the description
of a new prediction problem involves a conditional probability distribution $\pi_{Y|X}$. Standard neural network predictors can only provide a single output $y$. Thus, even
given full information of the distribution $\pi$, the uncertainty in the prediction of a single output $y$ remains. This uncertainty cannot be removed by better modeling or more data.

There are multiple work aiming to estimate the aleatoric uncertainty, i.e., to learn $\pi_{Y|X}$. For instance, conditional quantile regression aims to learn each quantile of the distribution $\pi_{Y|X}$ \citep{koenker2001quantile, meinshausen2006quantile,steinwart2011estimating}. Conditional density estimation aims to approximately describe the density of $\pi_{Y|X}$ \citep{holmes2007fast,dutordoir2018gaussian,izbicki2016nonparametric,dalmasso2020conditional,freeman2017unified,izbicki2017}. However, we remark that these approaches also face their own epistemic uncertainty in the estimation. See \cite{christmann2007svms,steinwart2011estimating,bai2021understanding} for recent studies on the epistemic bias in quantile regression.

\subsection{Predictive Uncertainty} \label{sec:predictive}

In certain scenarios, it is not necessary to estimate each uncertainty separately. A distinction between aleatoric and epistemic uncertainties might appear less significant. The user may only concern about the overall uncertainty related to the prediction, which is called predictive uncertainty and can be thought as the summation of the epistemic and aleatoric uncertainties. 

The most common way to quantify predictive uncertainty is the \textit{prediction set}: We aim to find a map $\hat{C}: \mathcal{X} \to 2^\mathcal{Y}$ which maps an input to a subset of the output space so that for each test point $x_0\in \mathcal{X}$, the prediction set $\hat{C}(x_0)$ is likely to cover the true outcome $y_0$ \citep{vovk2005algorithmic,barber2019limits,barber2019predictive,lei2014distribution,lei2015conformal,lei2018distribution}. This prediction set is also called a prediction interval in the case of regression \citep{khosravi2010lower,khosravi2011comprehensive}. The prediction set communicates predictive uncertainty via a statistical guarantee on the marginal coverage, i.e.,
$$\mathbb{P}(y_0 \in \hat{C}(x_0))\ge 1- \delta$$
for a small threshold $\delta>0$ where the probability is taken with respect to both training data $\mathcal{D}_{tr}$ (epistemic uncertainty) for learning $\hat{C}$ and the test data $(x_0,y_0)$ (aleatoric uncertainty). It is more tempting to obtain a statistical guarantee with only aleatoric uncertainty by considering the probablity conditional on the prediction set, i.e.,
\begin{equation} \label{equ:coverage}
\mathbb{P}(y_0 \in \hat{C}(x_0)|\hat{C})\ge 1- \delta.    
\end{equation}
However, this guarantee is in general very hard to achieve in the finite-sample case. Even asymptotically, \eqref{equ:coverage} is not easy to achieve unless we have a very simple structure on the data distribution \citep{rosenfeld2018discriminative,zhang2019random}. A recent study show that \eqref{equ:coverage} could hold in the finite-sample sense if we could leverage a set of validation data \citep{chen2021learning}.

%\section{Theory of Neural Tangent Kernel} \label{sec: NTKmore}

\section{Theory of Neural Tangent Kernel} \label{sec: NTK}

In this section, we provide a brief review on the theory of neural tangent kernel and discuss one of its results employed by our work.

NTK \citep{jacot2018neural} has attracted a lot of attention recently since it provides a new perspective on training dynamics, generalization, and expressibility of over-parameterized neural networks. Recent papers show that when training an over-parametrized neural network, the weight matrix at each layer is close to its initialization \citep{li2018learning,du2018gradient}. An over-parameterized neural network can rapidly reduce training error to zero via gradient descent, and thus finds a global minimum despite the objective function being non-convex \citep{du2019gradient,allen2019learning,allen2019convergence,allen2019convergence2,zou2020gradient}. Moreover,
the trained network also exhibits good generalization property \citep{cao2019generalization,cao2020generalization}.
%Moreover, under some conditions, the trained net also exhibits good generalization [Du et al., 2019, 2018b, Li and Liang, 2018,Allen-Zhu et al., 2018a,b, Zou et al., 2018, Arora et al., 2019, Cao and Gu, 2019]. 
These observations are implicitly described by a notion, NTK, suggested by \citet{jacot2018neural}. This kernel is able to characterize the training behavior of sufficiently wide fully-connected neural networks and build a new connection between neural networks and kernel methods. Another line of work is to extend NTK for fully-connected neural networks to CNTK for convolutional neural networks \citep{arora2019exact,yang2019scaling,li2019enhanced}. NTK is a fruitful and rapidly growing area. 

We focus on the work that is relevant to our epistemic uncertainty quantification: Since the gradient of over-parameterized neural networks is nearly constant and close to its initialization, training networks is similar to kernel regression where the feature map of the kernel is the gradient of networks \citep{lee2019wide}. This observation helps us to establish the equivalence between the NN predictor and the KR predictor. We provide more details about this point.

We consider a fully-connected
neural network defined formally as follows. Denote $g^{(0)}(x) = x$ and $d_0 = d$. Let
$$f^{(l)}(x) = W^{(l)} g^{(l-1)}(x), \ g^{(l)}(x) = \sqrt{\frac{2}{d_{l}}} \sigma(f^{(l)}(x))$$
where $W^{(l)} \in \mathbb{R}^{d_l \times d_{l-1}}$ is the weight matrix in the $l$-th layer, $\sigma$ is the coordinate-wise ReLu activation and $l\in [L]$. The output of the neural network is
$$h_\theta(x) = f^{(L+1)}(x) = W^{(L+1)} g^{(L)}(x)$$
where $W^{(L+1)} \in \mathbb{R}^{1 \times d_L}$, and $\theta = (W^{(1)}, ... ,W^{(L+1)})$
represents all the parameters in the network. Suppose the dimension of $\theta$ is $p$. We randomly initialize all the weights to be i.i.d. $\mathcal{N}(0, 1)$ random variables. In other words, we let $\Gamma_0 \sim \mathcal{N}(0, \bm{I}_p)$ and let $\theta_0=\theta(0)$ be an instantiation of $\Gamma_0$.
This initialization method is essentially the He initialization \citep{he2015delving}. %Let $\theta_0$ be the initialization of the network parameters (which is a Gaussian random vector).
The NTK matrix is defined recursively as \citep{jacot2018neural,arora2019exact}: For $l\in [L]$,
$$\Sigma^{(0)}(x, x') = x^Tx' \in \mathbb{R},$$
$$\Lambda^{(l)}(x, x') = \begin{pmatrix}
\Sigma^{(l-1)}(x, x) & \Sigma^{(l-1)}(x, x') \\
\Sigma^{(l-1)}(x, x')  & \Sigma^{(l-1)}(x', x') 
\end{pmatrix}\in \mathbb{R}^{2\times 2},$$
$$\Sigma^{(l)}(x, x') = 2 \mathbb{E}_{
(u,v)\sim \mathcal{N}(0,\Lambda(l))} [\sigma(u)\sigma(v)] \in \mathbb{R}.$$
We also define a derivative covariance:
$$\Sigma^{(l)}{'}(x, x') = 2 \mathbb{E}_{
(u,v)\sim \mathcal{N}(0,\Lambda(l))} [\sigma'(u)\sigma'(v)] \in \mathbb{R},$$
The final (population) NTK matrix is defined as
$$K(x, x') =\sum_{l=1}^{L+1}\left(\Sigma^{(l-1)}(x, x') \prod_{s=l}^{L+1}\Sigma^{(s)}{'}(x, x')\right)$$
where $\Sigma^{(L+1)}{'}(x, x')=1$ for convenience. Let $\langle \cdot, \cdot\rangle$ be the standard inner product in $\mathbb{R}^p$. Denote $J(\theta;x):=\nabla_{\theta} h_{\theta}(x)$. The empirical (since the initialization is random) NTK matrix is defined as
$K_{\theta}(x, x') =\langle J(\theta;x); J(\theta;x')\rangle.$
One of the most important statement in the NTK theory is that the empirical NTK matrix converges to the population NTK matrix as the width of the network increases
\citep{jacot2018neural,yang2019scaling,yang2020tensor,arora2019exact}. In practice we can use the empirical NTK matrix to approximate the population NTK matrix.

%We assume that the minimum eigenvalue of $K$, $\lambda_0:= \lambda_{\min}(K)$, is strictly positive which is a very mild assumption.

Based on NTK, we can derive that the neural network (NN) predictor is essentially a kernel regression (KR) predictor whose kernel is the NTK kernel. More precisely, we focus on the regression problem and consider the following regularized square loss function
\begin{equation} \label{equ:lossnn}
\hat{R}(h_\theta)=\frac{1}{n}\sum_{i=1}^n (h_\theta(x_i)-y_i)^2+ \lambda \|\theta-\theta_0\|^2_2.    
\end{equation}
We remark that in the theory of NTK, the training manner is typically the standard gradient descent/gradient flow (feeding the network with the entire training data) \citep{du2019gradient,lee2019wide,arora2019exact,zou2020gradient} so the uncertainty from the ordering of the data is of less importance. Therefore, we regard the random initialization $\Gamma_0$ as the only uncertainty from the procedural variability $\Gamma$.

Suppose the network is trained under the gradient flow: We minimize the empirical risk $\hat{R}(h_\theta)$ by gradient descent with infinitesimally small learning rate: $\frac{d\theta(t)}{dt} = -\nabla \hat{R}(h_\theta)$. 

Let $h(\theta(t); x)$ be the network output at time $t$. Then after training, the final output of the network is
\begin{equation} \label{equ:nnpredictor}
h_{nn}(\hat{\theta}; x) = \lim_{t\to \infty} h(\theta(t); x).
\end{equation}
Note that there is procedural variability in $\hat{\theta}$ because $\theta_0=\theta(0)$ is random. 

On the other hand, we build a kernel regression based on the NTK. Recall that we let $\bm{K}=(K(x_i,x_j))_{i,j=1,...,n} \in \mathbb{R}^{n\times n}$ be the NTK Gram matrix evaluated
on these training data. %Denote the minimum eigenvalue of $\bm{K}$, $\lambda_0:= \lambda_{\min}(\bm{K})$. 
For a test point $x_0 \in \mathbb{R}^d$, we let $K(x_0, \bm{x}) \in \mathbb{R}^{n}$ be the kernel value evaluated between the testing point and all training points, i.e., $K(x_0, \bm{x}):= (K(x_0, x_1), K(x_0, x_2), ... , K(x_0, x_n))^T$. The prediction of the kernel regression using NTK with respect to an initial $h(\theta_0; \cdot)$ is
\begin{align} \label{equ:krpredictor}
h_{kr}(x_0)= h(\theta_0; x_0)+
K(x_0, \bm{x})^T (\bm{K}+\lambda n \bm{I})^{-1} (\bm{y}-h(\theta_0;\bm{x}))
\end{align} 
In other words, $h_{kr}(x)-h(\theta_0; x)$ is the solution to the kernel ridge regression problem:
\begin{equation} \label{equ:losskr}
\argmin{g\in \bar{\mathcal{H}}} \frac{1}{n}\sum_{i=1}^n (y_i-h(\theta_0; x_i)-g(x_i))^2 + \lambda \|g\|^2_{\bar{\mathcal{H}}}
\end{equation}
where $\lambda$ is a regularization hyper-parameter and $\bar{\mathcal{H}}$ is the reproducing kernel Hilbert space constructed from the NTK kernel $K$ \citep{berlinet2011reproducing}. 

Note that there is procedural variability in $h_{kr}(x)$ because $\theta_0$ is random. %\citet{arora2019exact,lee2019wide,Hu2020Simple,he2020bayesian,zhang2020type} 

Previous work has established some equivalence between the NN predictor $h_{nn}(x)$ and the KR predictor $h_{kr}(x)$ based on a wide variety of assumptions; See below. We do not intend to dig into those detailed theorems and assume for simplicity that the following result holds:
\begin{assumption} \label{assu1}
Let $d_{\min}=\min\{d_1,...,d_L\}$. We have 
\begin{equation} \label{equ:NN=KR}
\lim_{d_{\min}\to \infty} \mathbb{E}_{\Gamma_0}[h_{nn}(\hat{\theta};x)]= \mathbb{E}_{\Gamma_0}[h_{kr}(x)].    
\end{equation}
where $\mathbb{E}_{\Gamma_0}$ is taken with respect to the random initialization $\Gamma_0$.
\end{assumption}

%Based on this assumption, one direction to estimate the epistemic variance \eqref{equ:var} is to analyze the variance of a KR predictor, which can be done via influence functions. This approach requires the evaluation of the NTK Gram matrix and its inversion. In addition, we propose an alternative approach based on the batching idea that can avoid the Gram matrix inversion.

In the case of $\lambda=0$ (no regularization term in the loss), \citet{arora2019exact,lee2019wide} have established the equivalence between the NN predictor and the KR predictor, as described below.

In \citet{arora2019exact}, their KR predictor is given by
\begin{align} \label{equ:krpredictornew}
h_{kr}(x_0)= K(x_0, \bm{x})^T (\bm{K}+\lambda n \bm{I})^{-1} \bm{y}
\end{align} 
without the initial term $h(\theta_0; x_0)$.
This is the standard kernel ridge regressor, which corresponds to kernel regression under gradient flow with zero initialization. Then they apply a small multiplier $\kappa > 0$, and modify the final output of the neural network to be 
\begin{equation} \label{equ:nnpredictor2}
h'_{nn}(x) = \kappa h_{nn}(\hat{\theta}; x).
\end{equation}
Their Theorem 3.2 rigorously establishes the equivalence between the (modified) NN predictor $h'_{nn}$ \eqref{equ:nnpredictor2} and the KR predictor $h_{kr}$ \eqref{equ:krpredictornew} using the NTK. Informally, with high probability with respect to the random initialization $\Gamma_0$, we have
$h_{kr}(x) \approx h'_{nn}(x).$

Simultaneously, \citet{lee2019wide} introduces the \textit{linearized neural network}, i.e,  they replace
the outputs of the neural network by their first order Taylor expansion. They obtain the final output of the linearized neural network (meaning $t\to \infty$ in the gradient descent) as
\begin{equation} \label{equ:linpredictor}
h^{lin}(\hat{\theta};x) := h(\theta_0; x) + K(x_0, \bm{x})^T \bm{K}^{-1} (\bm{y}-h(\theta_0; \bm{x}))
\end{equation}
which corresponds to our \eqref{equ:krpredictor} when $\lambda=0$. %; See Theorem \ref{thm:linear} for more details. 
They show that in the infinite width limit (which is less realistic than \citet{arora2019exact}), we have informally
$$h^{lin}(\hat{\theta};x)\approx h_{nn}(\hat{\theta};x).$$
%Note that this derivation also coincides with the output of the linearized neural network \citep{lee2019wide}:
In the case of $\lambda\ge 0$, \citet{he2020bayesian,Hu2020Simple} consider to add the weight regularization in the both loss functions, i.e., the $\lambda$ in \eqref{equ:lossnn} and \eqref{equ:losskr}. %They show the result similar to \eqref{equ:NN=KR}. 
%Let 
%$h_{nn}^\lambda(x)$ be the final output of the NN predictor and $h_{kr}^\lambda(x)$ be the KR predictor with this $\lambda$:
%\begin{align} \label{equ:krpredictor2}
%h_{kr}(\theta_0;x_0) := h(\theta_0; x_0) + K(x_0, \bm{x})^T (\bm{K}+\lambda n \bm{I})^{-1} (\bm{y}-h(\theta_0; \bm{x})).
%\end{align}
These papers have shown certain equivalence between the NN predictor $h_{nn}(x)$ and the KR predictor $h_{kr}(x)$, which is similar to \eqref{equ:NN=KR}. They adopt a wide variety of assumptions, such as infinite width of networks, or zero-valued initialization $h(\theta_0;\cdot)$, or a small multiplier $\kappa$, or normalized training inputs $\|x_i\|_2=1$. Their results are also of different forms. The one we adopt in this paper is close to \cite{lee2019wide} but with a weight regularization \citep{he2020bayesian}. Since a uniform statement is not available, we do not intend to write down a specific theorem for \eqref{equ:NN=KR} in this paper. Interested readers can refer to the above papers.

%In Sections \ref{sec: NTK} and \ref{sec: IF}, we will use $h_\theta(x)=h(\theta;x)$ as a shorthand for $h(\theta, \hat{\pi}_{\mathcal{D}_{tr}};x)$ since $\hat{\pi}_{\mathcal{D}_{tr}}$ remains the same throughout those sections. %In Section \ref{sec: NTK}, we will use $h_\theta(x)=h(\theta;x)$ as a shorthand for $h(\theta, \hat{\pi}_{\mathcal{D}_{tr}};x)$ since $\hat{\pi}_{\mathcal{D}_{tr}}$ remains the same throughout this section. 

\section{Proofs} \label{sec: proofs}

In this section, we provide detailed proofs for results in the main paper.

\begin{proof}[Proof of Proposition \ref{thm:modeluncertainty}]
For any $h\in \mathcal{H}$, we have by the definition of $f^*$
$$\mathbb{E}_{Y \sim  \pi_{Y|X}} [\mathcal{L}(f^*(X), Y)|X]\le \mathbb{E}_{Y \sim  \pi_{Y|X}} [\mathcal{L}(h(X), Y)|X]$$
Taking expectation with respect to $X$, we obtain
\begin{align*}
&\mathbb{E}[\mathcal{L}(f^*(X), Y)] \\
= &\mathbb{E}[\mathbb{E}_{Y \sim  \pi_{Y|X}} [\mathcal{L}(f^*(X), Y)|X]] \\
\le &\mathbb{E}[\mathbb{E}_{Y \sim  \pi_{Y|X}} [\mathcal{L}(h(X), Y)|X]]\\
= &\mathbb{E}[\mathcal{L}(h(X), Y)].
\end{align*}
So $f^*$ is the true risk minimizer in \eqref{equ:trueminimizer}, i.e., $h_{\theta^*}=f^*$.
\end{proof}

\begin{proof}[Proof of Theorem \ref{thm: decomp}]
We first note that
$$\text{UQ}_{MU}=h_{\theta^*}-f^*$$
which is a deterministic value. So $\text{UQ}_{MU}$ is uncorrelated with the $\text{UQ}_{PV}$ and $\text{UQ}_{DV}$ and
$$\text{Var} (\text{UQ}_{MU}+\text{UQ}_{PV}+\text{UQ}_{DV})=\text{Var} (\text{UQ}_{PV}+\text{UQ}_{DV}).$$
Furthermore, we have 
\begin{align*}
\text{Cov}(\text{UQ}_{PV},\text{UQ}_{DV})&=\text{Cov}(\mathbb E[\text{UQ}_{PV}|\hat{\pi}_{\mathcal{D}_{tr}}],\mathbb E[\text{UQ}_{DV}|\hat{\pi}_{\mathcal{D}_{tr}}])+\mathbb E[\text{Cov}(\text{UQ}_{PV},\text{UQ}_{DV}|\hat{\pi}_{\mathcal{D}_{tr}})]\\
&=0+0=0
\end{align*}
% \mathbb{E}[\mathbb{E}[\text{UQ}_{PV}\text{UQ}_{DV}|\hat{\pi}_{\mathcal{D}_{tr}}]]=\mathbb{E}[\text{UQ}_{DV}\mathbb{E}[\text{UQ}_{PV}|\hat{\pi}_{\mathcal{D}_{tr}}]]\\
% &=\mathbb{E}[\text{UQ}_{DV}]\mathbb{E}[\text{UQ}_{PV}|\hat{\pi}_{\mathcal{D}_{tr}}]=\mathbb{E}[\text{UQ}_{DV}]\mathbb{E}[\text{UQ}_{PV}]    
% \end{align*}
% \begin{align*}
% \mathbb{E}[\text{UQ}_{PV}\text{UQ}_{DV}]&=\mathbb{E}[\mathbb{E}[\text{UQ}_{PV}\text{UQ}_{DV}|\hat{\pi}_{\mathcal{D}_{tr}}]]=\mathbb{E}[\text{UQ}_{DV}\mathbb{E}[\text{UQ}_{PV}|\hat{\pi}_{\mathcal{D}_{tr}}]]\\
% &=\mathbb{E}[\text{UQ}_{DV}]\mathbb{E}[\text{UQ}_{PV}|\hat{\pi}_{\mathcal{D}_{tr}}]=\mathbb{E}[\text{UQ}_{DV}]\mathbb{E}[\text{UQ}_{PV}]    
% \end{align*}
where the first equality uses conditioning, and the second equality is because $\mathbb{E}[\text{UQ}_{PV}|\hat{\pi}_{\mathcal{D}_{tr}}]$, and also $\text{UQ}_{DV}$ given $\hat\pi_{\mathcal D_{tr}}$, are constant. This shows that $\text{UQ}_{PV}$ and $\text{UQ}_{DV}$ are uncorrelated, so
\begin{align*}
&\text{Var}(\text{UQ}_{PV}+\text{UQ}_{DV})=\text{Var}(\text{UQ}_{PV})+\text{Var}(\text{UQ}_{DV})\\
= &\text{Var}(\mathbb{E}[\text{UQ}_{PV}|\hat{\pi}_{\mathcal{D}_{tr}}])+\mathbb{E}[\text{Var}(\text{UQ}_{PV}|\hat{\pi}_{\mathcal{D}_{tr}})]
+\text{Var}(\mathbb{E}[\text{UQ}_{DV}|\hat{\pi}_{\mathcal{D}_{tr}}])+\mathbb{E}[\text{Var}(\text{UQ}_{DV}|\hat{\pi}_{\mathcal{D}_{tr}})]\\
= & 0+\mathbb{E}[\text{Var}(\text{UQ}_{PV}|\hat{\pi}_{\mathcal{D}_{tr}})]
+\text{Var}(\mathbb{E}[\text{UQ}_{DV}|\hat{\pi}_{\mathcal{D}_{tr}}])+0\\
= & \mathbb{E}[\text{Var}(\text{UQ}_{PV}|\hat{\pi}_{\mathcal{D}_{tr}})]
+\text{Var}(\mathbb{E}[\text{UQ}_{DV}|\hat{\pi}_{\mathcal{D}_{tr}}])
\end{align*}
where the second equality follows from conditioning, and the third equality is again because $\mathbb{E}[\text{UQ}_{PV}|\hat{\pi}_{\mathcal{D}_{tr}}]$, and $\text{UQ}_{DV}$ given $\hat\pi_{\mathcal D_{tr}}$, are constant. 
% :
% \begin{align*}
% &\text{Var}[V]=\mathbb{E}[V^2]-(\mathbb{E}[V])^2=\mathbb{E}[V^2]-(\mathbb{E}[\mathbb{E}[V|\mathcal{F}_0]])^2\\
% = & \mathbb{E}[\mathbb{E}[V^2|\mathcal{F}_0]-(\mathbb{E}[V|\mathcal{F}_0])^2]+\mathbb{E}[(\mathbb{E}[V|\mathcal{F}_0])^2]-(\mathbb{E}[\mathbb{E}[V|\mathcal{F}_0]])^2\\
% = & \mathbb{E}[\text{Var}(V|\mathcal{F}_0)]+\text{Var}(\mathbb{E}[(V|\mathcal{F}_0])
% \end{align*}
% for any $L^2$-integrable random variable $V$ and any sub-$\sigma$-field $\mathcal{F}_0$.

%\label{equ:var1}
%&=&\text{Var}( \mathbb{E}[\text{UQ}_{DV}|\hat{\pi}_{\mathcal{D}_{tr}}])+\mathbb{E}[\text{Var} (\text{UQ}_{PV}|\hat{\pi}_{\mathcal{D}_{tr}})]\label{equ:var2}
%\approx & \text{Var}( \mathbb{E}[h(\Gamma, \hat{\pi}_{\mathcal{D}_{tr}};x_0)|\hat{\pi}_{\mathcal{D}_{tr}}]) +\text{Var} (h(\Gamma, \pi;x_0))\\

\end{proof}

\begin{proof}[Proof of Theorem \ref{thm:var}]
In terms of estimation bias, we note that
$$\mathbb{E}[h_{\hat{\theta}_1}(x_0)|\hat{\pi}_{\mathcal{D}_{tr}}]=\mathbb{E}[h(\Gamma, \hat{\pi}_{\mathcal{D}_{tr}};x_0)|\hat{\pi}_{\mathcal{D}_{tr}}].$$
$$
\mathbb{E}[\bar{h}_{m'}(x_0)|\hat{\pi}_{\mathcal{D}_{tr}}]=\frac{1}{m'} \sum_{i=1}^{m'} \mathbb{E}[h_{\hat{\theta}_i}(x_0)|\hat{\pi}_{\mathcal{D}_{tr}}]
=\mathbb{E}[h(\Gamma, \hat{\pi}_{\mathcal{D}_{tr}};x_0)|\hat{\pi}_{\mathcal{D}_{tr}}].
$$
This implies that the deep ensemble prediction and the single
model prediction are the same. In terms of variance, note that $h_{\hat{\theta}_1}(x_0),...,h_{\hat{\theta}_{m'}}(x_0)$ are conditionally independent given $\hat{\pi}_{\mathcal{D}_{tr}}$. Therefore deep ensemble gives
\begin{align*}
\text{Var}(\bar{h}_{m'}(x_0)|\hat{\pi}_{\mathcal{D}_{tr}})=\frac{1}{(m')^2} \sum_{i=1}^{m'} \text{Var}(h_{\hat{\theta}_i}(x_0)|\hat{\pi}_{\mathcal{D}_{tr}})=\frac{1}{m'} \text{Var}(h(\Gamma, \hat{\pi}_{\mathcal{D}_{tr}};x_0)|\hat{\pi}_{\mathcal{D}_{tr}})
\end{align*}
while the single model prediction gives $\text{Var}(h(\Gamma,\hat{\pi}_{\mathcal{D}_{tr}};x_0)|\hat{\pi}_{\mathcal{D}_{tr}})$. Now using conditioning, we have 
\begin{align*}
&\text{Var} (\bar{h}_{m'}(x_0))\\
= & \text{Var}( \mathbb{E}[\bar{h}_{m'}(x_0))|\hat{\pi}_{\mathcal{D}_{tr}}]) +\mathbb{E}[\text{Var} (\bar{h}_{m'}(x_0)|\hat{\pi}_{\mathcal{D}_{tr}})]\\
= & \text{Var}(\mathbb{E}[h(\Gamma, \hat{\pi}_{\mathcal{D}_{tr}};x_0)|\hat{\pi}_{\mathcal{D}_{tr}}]) + \mathbb{E}[\frac{1}{m'} \text{Var}(h(\Gamma, \hat{\pi}_{\mathcal{D}_{tr}};x_0)|\hat{\pi}_{\mathcal{D}_{tr}})]\\
\le & \text{Var}(\mathbb{E}[h(\Gamma, \hat{\pi}_{\mathcal{D}_{tr}};x_0)|\hat{\pi}_{\mathcal{D}_{tr}}]) +\mathbb{E}[ \text{Var}(h(\Gamma, \hat{\pi}_{\mathcal{D}_{tr}};x_0)|\hat{\pi}_{\mathcal{D}_{tr}})]\\
= & \text{Var}(h(\Gamma, \hat{\pi}_{\mathcal{D}_{tr}};x_0))\\
= & \text{Var}(h_{\hat{\theta}_1}(x_0))
\end{align*}
as desired.
\end{proof}

%Our first observation is that
To prove Theorem \ref{thm:IF}, we need the following fact:
\begin{lemma} \label{thm:losskr}
$\bar{h}(\hat{\pi}_{\mathcal{D}_{tr}};\cdot)$ in \eqref{equ:KRR} is the solution to the following kernel ridge regression problem
\begin{equation} \label{equ:losskr2}
\bar{h}-h_0= \argmin{g\in \bar{\mathcal{H}}} \frac{1}{n}\sum_{i=1}^n (y_i-h_0(x_i)-g(x_i))^2 + \lambda \|g\|^2_\mathcal{\bar{H}}.
\end{equation}
\end{lemma}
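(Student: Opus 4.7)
The plan is to apply the representer theorem for kernel ridge regression in the RKHS $\bar{\mathcal H}$ generated by the NTK, and then carry out the resulting finite-dimensional quadratic minimization explicitly to match the closed form in \eqref{equ:KRR}.

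First, I would observe that the objective in \eqref{equ:losskr2} is a strongly convex functional of $g\in\bar{\mathcal H}$ (the regularizer $\lambda\|g\|_{\bar{\mathcal H}}^2$ is strictly convex and the empirical loss term is convex), so a unique minimizer exists. By the standard representer theorem for Tikhonov-regularized empirical risk minimization in an RKHS, the minimizer has the form $g^*(\cdot)=\sum_{i=1}^{n}\alpha_i K(x_i,\cdot)$ for some $\bm\alpha=(\alpha_1,\dots,\alpha_n)^T\in\mathbb R^n$. This is because any component of $g$ orthogonal to the span of $\{K(x_i,\cdot)\}_{i=1}^n$ does not affect the fitting term (by the reproducing property) but strictly increases the norm penalty.

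Next, I would substitute this representation into \eqref{equ:losskr2}. Using $g^*(x_j)=\sum_i\alpha_i K(x_i,x_j)=(\bm K\bm\alpha)_j$ and $\|g^*\|_{\bar{\mathcal H}}^2=\bm\alpha^T\bm K\bm\alpha$, the problem reduces to the finite-dimensional convex quadratic
\[
\min_{\bm\alpha\in\mathbb R^n}\ \frac{1}{n}\bigl\|\bm r-\bm K\bm\alpha\bigr\|_2^2+\lambda\,\bm\alpha^T\bm K\bm\alpha,\qquad \bm r:=\bm y-h_0(\bm x).
\]
Setting the gradient with respect to $\bm\alpha$ to zero gives $-\tfrac{2}{n}\bm K(\bm r-\bm K\bm\alpha)+2\lambda\bm K\bm\alpha=0$, i.e.\ $\bm K(\bm K+\lambda n\bm I)\bm\alpha=\bm K\bm r$. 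Multiplying by $(\bm K+\lambda n\bm I)^{-1}$ on the left (which commutes with $\bm K$) yields $\bm K\bm\alpha=\bm K(\bm K+\lambda n\bm I)^{-1}\bm r$, and since any $\bm\alpha$ lying in the kernel of $\bm K$ contributes nothing to $g^*$ as a function, we may take $\bm\alpha=(\bm K+\lambda n\bm I)^{-1}\bm r$.

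Finally, plugging this $\bm\alpha$ into $g^*(x_0)=K(x_0,\bm x)^T\bm\alpha$ gives
\[
g^*(x_0)=K(x_0,\bm x)^T(\bm K+\lambda n\bm I)^{-1}(\bm y-h_0(\bm x)),
\]
which is exactly $\bar h(\hat\pi_{\mathcal D_{tr}};x_0)-h_0(x_0)$ by \eqref{equ:KRR}. This proves $\bar h-h_0$ solves \eqref{equ:losskr2}. The only step that requires any care is the invocation of the representer theorem for $\bar{\mathcal H}$ (the RKHS of the NTK) and the fact that adding the constant shift $h_0$ does not alter the optimization, since $h_0$ appears only through the offset label $y_i-h_0(x_i)$; everything else is routine quadratic calculus.
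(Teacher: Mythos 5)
Your proof is correct and follows essentially the same route as the paper: both reduce the claim to the standard closed-form solution of kernel ridge regression applied to the shifted labels $y_i-h_0(x_i)$. The only difference is that the paper simply cites this closed form (Theorem 1 of Smale and Zhou), whereas you re-derive it via the representer theorem and the finite-dimensional quadratic minimization, which is a sound and self-contained substitute.
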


\begin{proof}[Proof of Lemma \ref{thm:losskr}]
We first let $\tilde{y}_i=y_i-h_0(x_i)$ be the shifted label. With a little abuse of notation, $\hat{\pi}_{\mathcal{D}_{tr}}$ should be understood as the empirical distribution on the shifted training data $\{(x_i, \tilde{y}_i): i=1,...,n\}$. Now consider the kernel ridge regression problem
$$\argmin{g\in \bar{\mathcal{H}}} \frac{1}{n}\sum_{i=1}^n (\tilde{y}_i-g(x_i))^2 + \lambda \|g\|^2_\mathcal{\bar{H}}.$$
Standard theory in kernel ridge regression (e.g., Theorem 1 in \citet{smale2005shannon}) shows that the closed form of the solution to the above problem is
$$g^*(x_0) = K(x_0, \bm{x})^T(\bm{K}+\lambda n \bm{I})^{-1} \bm{\tilde{y}}= K(x_0, \bm{x})^T(\bm{K}+\lambda n \bm{I})^{-1} (\bm{y}-h_0(\bm{x}))$$
which corresponds to the definition of $\bar{h}(\hat{\pi}_{\mathcal{D}_{tr}};x_0)-h_0(x_0)$.
\end{proof}

\begin{proof}[Proof of Theorem \ref{thm:IF}]
Let $\bar{h}(x)=\bar{h}(\hat{\pi}_{\mathcal{D}_{tr}};x)$ for short. We first note that $\text{Var}(\bar{h}(x_0)-h_0(x_0))=\text{Var}(\bar{h}(x_0))$ where the variance is only respect to the training data. Hence it is sufficient to study the variance of $\bar{h}(x_0)-h_0(x_0)$ which is the solution of the problem \eqref{equ:losskr2} in Lemma \ref{thm:losskr}.

Let $\Phi$ be the feature map associated with the NTK kernel. We apply Proposition 5 in \citet{debruyne2008model} (see also \citet{christmann2007consistency}) to calculate the influence function. With a slight abuse of notation, we use $IF(z; T, \hat{\pi}_{\mathcal{D}_{tr}})(x_0)$ to represent the influence function at the fixed test point $x_0$ in the main paper, and reserve $IF(z; T, \hat{\pi}_{\mathcal{D}_{tr}})$ as a function only in this proof to be consistent with \citet{debruyne2008model}.
\begin{align*}
IF(z; T, \hat{\pi}_{\mathcal{D}_{tr}}) &= -S^{-1}(2\lambda (\bar{h}-h_0)) + \mathcal{L}'(z_{y}-h_0(z_x)- (\bar{h}(z_x)-h_0(z_x))) S^{-1}\Phi (z_x)   \\
&= -S^{-1}(2\lambda (\bar{h}-h_0)) + 2(z_{y}- \bar{h}(z_x)) S^{-1}\Phi (z_x) 
\end{align*}
where $S : \mathcal{H} \to H$ is defined by $$S(f) = 2\lambda f + \mathbb{E}_{\hat{\pi}_{\mathcal{D}_{tr}}} [\mathcal{L}''(Y - \bar{h}(X))\langle \Phi(X), f\rangle  \Phi(X)]$$

To do this, we need to obtain $S$ and $S^{-1}$. Since the loss function is $\mathcal{L}(\hat{y}-y)=(\hat{y}-y)^2$, we have
$$S(f) = 2\lambda f + \mathbb{E}_{\hat{\pi}_{\mathcal{D}_{tr}}} [\mathcal{L}''(Y - \bar{h}(X))\langle \Phi(X), f\rangle  \Phi(X)]= 2\lambda f + \frac{2}{n} \sum_{j=1}^{n} f(x_j)\Phi(x_j ).$$
Suppose $S^{-1}(2\lambda (\bar{h}-h_0))=g_1$. Then at $x_0$, we have
$$2\lambda (\bar{h}(x_0)-h_0(x_0))=S(g_1(x_0))=2\lambda g_1(x_0) + \frac{2}{n} \sum_{j=1}^{n} g_1(x_j)K(x_0,x_j).$$
Hence
$$g_1(x_0)=\bar{h}(x_0)-h_0(x_0)-\frac{1}{\lambda n} \sum_{j=1}^n g_1(x_j)K(x_0,x_j)$$
This implies that we need to evaluate $g_1(x_j)$ on training data first, which is straightforward by letting $x_0=x_1,...,x_n$:
$$g_1(\bm{x})=\bar{h}(\bm{x})-h_0(\bm{x})-\frac{1}{\lambda n} \bm{K} g_1(\bm{x})$$
so 
$$g_1(\bm{x})=(\bm{K}+\lambda n \bm{I})^{-1}(\lambda n)\left(\bar{h}(\bm{x})-h_0(\bm{x})\right)$$
and
\begin{align*}
g_1(x_0)&=\bar{h}(x_0)-h_0(x_0)-\frac{1}{\lambda n} K(x_0, \bm{x})^T g_1(\bm{x})    \\
&=\bar{h}(x_0)-h_0(x_0)-K(x_0, \bm{x})^T (\bm{K}+\lambda n \bm{I})^{-1}\left(\bar{h}(\bm{x})-h_0(\bm{x})\right)
\end{align*}
Next we compute $S^{-1}\Phi (z_x)=g_2$. At $x_0$, we have
$$K(z_x,x_0)=\Phi (z_x)(x_0)=S(g_2(x_0))=2\lambda g_2(x_0) + \frac{2}{n} \sum_{j=1}^{n} g_2(x_j)K(x_0,x_j)$$
Hence 
$$g_2(x_0)=\frac{1}{2\lambda} K(z_x,x_0)-\frac{1}{\lambda n} \sum_{j=1}^n g_2(x_j)K(x_0,x_j)$$
This implies that we need to evaluate $g_2(x_j)$ on training data first, which is straightforward by letting $x_0=x_1,...,x_n$:
$$g_2(\bm{x})=\frac{1}{2\lambda} K(z_x,\bm{x})-\frac{1}{\lambda n} \bm{K} g_2(\bm{x})$$
so 
$$g_2(\bm{x})=(\bm{K}+\lambda n \bm{I})^{-1}(\frac{n}{2})K(z_x,\bm{x})$$
and
\begin{align*}
g_2(x_0)&=\frac{1}{2\lambda} K(z_x,x_0)-\frac{1}{\lambda n} K(x_0,\bm{x})^T g_2(\bm{x})    \\
&=\frac{1}{2\lambda} K(z_x,x_0)-\frac{1}{2\lambda} K(x_0, \bm{x})^T (\bm{K}+\lambda n \bm{I})^{-1}K(z_x,\bm{x})
\end{align*}
Combing previous results, we obtain
\begin{align*}
IF(z; T, \hat{\pi}_{\mathcal{D}_{tr}})(x_0)=&-\bar{h}(x_0)+h_0(x_0)+K(x_0, \bm{x})^T (\bm{K}+\lambda n \bm{I})^{-1}\left(\bar{h}(\bm{x})-h_0(\bm{x})\right)\\
&+2(z_{y}- \bar{h}(z_x)) \left(\frac{1}{2\lambda} K(z_x,x_0)-\frac{1}{2\lambda} K(x_0, \bm{x})^T (\bm{K}+\lambda n \bm{I})^{-1}K(z_x,\bm{x})\right)\\
=&K(x_0, \bm{x})^T (\bm{K}+\lambda n \bm{I})^{-1} \left(\bar{h}(\bm{x})-h_0(\bm{x})-\frac{1}{\lambda}(z_{y}- \bar{h}(z_x)) K(z_x,\bm{x})\right)\\
&-\bar{h}(x_0)+h_0(x_0)+\frac{1}{\lambda}(z_{y}-\bar{h}(z_x)) K(z_x,x_0)
\\
=& K(x_0, \bm{x})^T (\bm{K}+\lambda n \bm{I})^{-1} M_z(\bm{x})-M_z(x_0)
\end{align*}
as desired.
\end{proof}

\begin{proof}[Proof of Theorem \ref{thm:IF2}]
In this proof, we write $\hat{\pi}^n_{\mathcal{D}_{tr}}$ rather than $\hat{\pi}^n_{\mathcal{D}_{tr}}$ to emphsize the size of training dataset $n$.
Since $T(P)$ is $\rho_\infty$-Hadamard differentiable at $\pi$, we have the following asymptotic normality by the central limit theorem \citep{fernholz2012mises,shao2012jackknife}:
$$\sqrt{n} (T(\hat{\pi}^n_{\mathcal{D}_{tr}})-T(\pi))\xrightarrow{\text{d}} \mathcal{N}(0,\sigma^2)$$
where $\sigma^2=\mathbb{E}_{z\sim \pi} [IF^2(z; T, \pi)]$. In other words, 
$$\sqrt{n} (\bar{h}(\hat{\pi}^n_{\mathcal{D}_{tr}};x_0)-h_{\theta^*}(x_0))\xrightarrow{\text{d}} \mathcal{N}(0,\sigma^2)$$
The uniform integrability implies (by Skorokhod's theorem and Vitali convergence theorem) that
$$\mathbb{E}\Big[\Big(\sqrt{n} (\bar{h}(\hat{\pi}^n_{\mathcal{D}_{tr}};x_0)-h_{\theta^*}(x_0))\Big)^2\Big]\to \mathbb{E}\Big[\mathcal{N}(0,\sigma^2)^2\Big]=\sigma^2$$
which gives that
\begin{equation} \label{4.3equ0}
\lim_{n\to\infty} |n \text{Var} (\bar{h}(\hat{\pi}^n_{\mathcal{D}_{tr}};x_0))- \sigma^2|=0    
\end{equation}

Since $T(P)$ is continuously $\rho_\infty$-Hadamard differentiable (which implies continuously G\^{a}teaux differentiable) at $\hat{\pi}^n_{\mathcal{D}_{tr}}$, we have that
$$\lim_{t\to 0} \left[\frac{T(\hat{\pi}^n_{\mathcal{D}_{tr}}+t(\delta_{z}-T(\hat{\pi}^n_{\mathcal{D}_{tr}}))-T(\hat{\pi}^n_{\mathcal{D}_{tr}})}{t}-IF(z; T, \hat{\pi}^n_{\mathcal{D}_{tr}})\right]=0, \quad \text{ uniformly in } z$$
where implies that for every $n$, there exists a $t_n$ (depending on $\hat{\pi}^n_{\mathcal{D}_{tr}}$), such that
\begin{equation} \label{4.3equ1}
\left|\frac{T(\hat{\pi}^n_{\mathcal{D}_{tr}}+t_n(\delta_{z}-T(\hat{\pi}^n_{\mathcal{D}_{tr}}))-T(\hat{\pi}^n_{\mathcal{D}_{tr}})}{t_n}-IF(z; T, \hat{\pi}^n_{\mathcal{D}_{tr}})\right|\le \epsilon, \quad \text{ uniformly in } z.
\end{equation}
Without loss of generality, we assume $t_n\to 0$ as $n\to \infty$.

Again, note that $T(P)$ is continuously $\rho_\infty$-Hadamard differentiable (which implies continuously G\^{a}teaux differentiable) at $\pi$. Then since $t_n\to 0$ and $\rho_\infty(\hat{\pi}^n_{\mathcal{D}_{tr}},\pi)\xrightarrow{a.s.} 0$, the definition implies that
$$\lim_{n\to \infty} \left[\frac{T(\hat{\pi}^n_{\mathcal{D}_{tr}}+t_n(\delta_{z}-T(\hat{\pi}^n_{\mathcal{D}_{tr}}))-T(\hat{\pi}^n_{\mathcal{D}_{tr}})}{t_n}-L_\pi(\delta_z-\hat{\pi}^n_{\mathcal{D}_{tr}})\right]=0, \quad a.s.\ \text{ uniformly in } z$$
where $L_\pi(\delta_z-\hat{\pi}^n_{\mathcal{D}_{tr}})$ is the G\^{a}teaux differential of $T$ at $\pi$. Therefore, there exists a $N_1$, such that for $n\ge N_1$,
\begin{equation} \label{4.3equ2}
\left|\frac{T(\hat{\pi}^n_{\mathcal{D}_{tr}}+t_n(\delta_{z}-T(\hat{\pi}^n_{\mathcal{D}_{tr}}))-T(\hat{\pi}^n_{\mathcal{D}_{tr}})}{t_n}-L_\pi(\delta_z-\hat{\pi}^n_{\mathcal{D}_{tr}})\right|\le \epsilon, \quad a.s.\ \text{ uniformly in } z.    
\end{equation}
As G\^{a}teaux differential is a linear functional, we have that
\begin{align}
L_\pi(\delta_z-\hat{\pi}^n_{\mathcal{D}_{tr}})&=L_\pi(\delta_z-\pi)+ L_\pi(\pi-\hat{\pi}^n_{\mathcal{D}_{tr}})\nonumber\\
&=L_\pi(\delta_z-\pi)+ \frac{1}{n}\sum_{i=1}^n L_\pi(\pi-\delta_{z_i})\nonumber\\
&=IF(z; T, \pi)-\frac{1}{n}\sum_{i=1}^{n} IF(z_i; T, \pi)  \label{4.3equ3}  
\end{align}
Therefore, combing \eqref{4.3equ1}, \eqref{4.3equ2} and \eqref{4.3equ3}, we have that for $n\ge N_1$,
\begin{equation*}
\left|IF(z; T, \hat{\pi}^n_{\mathcal{D}_{tr}})-\left(IF(z; T, \pi)-\frac{1}{n}\sum_{i=1}^{n} IF(z_i; T, \pi)\right)\right|\le 2\epsilon, \quad \text{uniformly in } z.    
\end{equation*}
Hence
\begin{align*}
&\left|IF^2(z; T, \hat{\pi}^n_{\mathcal{D}_{tr}})-\left(IF(z; T, \pi)-\frac{1}{n}\sum_{i=1}^{n} IF(z_i; T, \pi)\right)^2\right|\\
=&\left|IF(z; T, \hat{\pi}^n_{\mathcal{D}_{tr}})+\left(IF(z; T, \pi)-\frac{1}{n}\sum_{i=1}^{n} IF(z_i; T, \pi)\right)\right| \left|IF(z; T, \hat{\pi}^n_{\mathcal{D}_{tr}})-\left(IF(z; T, \pi)-\frac{1}{n}\sum_{i=1}^{n} IF(z_i; T, \pi)\right)\right|\\
\le& \epsilon\left(2\left|\left(IF(z; T, \pi)-\frac{1}{n}\sum_{i=1}^{n} IF(z_i; T, \pi)\right)\right|+ \epsilon\right)\\
=& 2\epsilon\left|\left(IF(z; T, \pi)-\frac{1}{n}\sum_{i=1}^{n} IF(z_i; T, \pi)\right)\right|+ \epsilon^2.
\end{align*}
Since this inequality holds for uniformly in $z$, take $z=z_j$ where $j \in [n]$, we obtain
\begin{align}
&\left|\frac{1}{n}\sum_{j=1}^n IF^2(z_j; T, \hat{\pi}^n_{\mathcal{D}_{tr}})-\frac{1}{n}\sum_{j=1}^n\left(IF(z_j; T, \pi)-\frac{1}{n}\sum_{i=1}^{n} IF(z_i; T, \pi)\right)^2\right|\nonumber\\
\le & 2\epsilon \frac{1}{n}\sum_{j=1}^n\left|\left(IF(z_j; T, \pi)-\frac{1}{n}\sum_{i=1}^{n} IF(z_i; T, \pi)\right)\right|+ \epsilon^2\nonumber\\
\le & 2\epsilon \sqrt{\frac{1}{n}\sum_{j=1}^n\left(IF(z_j; T, \pi)-\frac{1}{n}\sum_{i=1}^{n} IF(z_i; T, \pi)\right)^2}+ \epsilon^2 \label{4.3equ4}
\end{align}
Note that by the strong law of large number, we have that
\begin{align*}
&\frac{1}{n}\sum_{j=1}^n\left(IF(z_j; T, \pi)-\frac{1}{n}\sum_{i=1}^{n} IF(z_i; T, \pi)\right)^2\\
=&\frac{1}{n}\sum_{j=1}^n IF^2(z_j; T, \pi)-\left(\frac{1}{n}\sum_{i=1}^{n} IF(z_i; T, \pi)\right)^2\\
\xrightarrow{a.s.} & \sigma^2
\end{align*}
since $\mathbb{E}_{z\sim \pi} [IF^2(z; T, \pi)]=\sigma^2$ and $\mathbb{E}_{z\sim \pi} [IF(z; T, \pi)]=0$. Therefore \eqref{4.3equ4} implies that
\begin{align*}
\limsup_{n\to \infty}\left|\frac{1}{n}\sum_{j=1}^n IF^2(z_j; T, \hat{\pi}^n_{\mathcal{D}_{tr}})-\frac{1}{n}\sum_{j=1}^n\left(IF(z_j; T, \pi)-\frac{1}{n}\sum_{i=1}^{n} IF(z_i; T, \pi)\right)^2\right|\nonumber
\le 2\epsilon \sigma+ \epsilon^2, \quad a.s. 
\end{align*}
and thus
\begin{align*}
\lim_{n\to \infty}\left|\frac{1}{n}\sum_{j=1}^n IF^2(z_j; T, \hat{\pi}^n_{\mathcal{D}_{tr}})-\sigma^2\right|=0, \quad a.s. 
\end{align*}
Combing it with \eqref{4.3equ0}, we obtain
\begin{align*}
\lim_{n\to \infty}\left|\frac{1}{n}\sum_{j=1}^n IF^2(z_j; T, \hat{\pi}^n_{\mathcal{D}_{tr}})-n \text{Var} (\bar{h}(\hat{\pi}^n_{\mathcal{D}_{tr}};x_0))\right|=0, \quad a.s. 
\end{align*}
\end{proof}

%\begin{proof}[Proof of Theorem \ref{thm:EV}]
%Note that $\mathbb{E}[h(\Gamma, \hat{\pi}_{\mathcal{D}_{tr}};x_0)^2]< \infty$ implies that
%$$\mathbb{E}[\text{Var} (h(\Gamma, \hat{\pi}_{\mathcal{D}_{tr}};x_0)|\hat{\pi}_{\mathcal{D}_{tr}})]\le \mathbb{E}[h(\Gamma, \hat{\pi}_{\mathcal{D}_{tr}};x_0)^2]< \infty$$
%so by the Varadarajan's theorem (noting that $\mathbb{R}^{d+1}$ is a  separable metric space),
%$$\text{Var} (h(\Gamma, \hat{\pi}_{\mathcal{D}_{tr}};x_0)|\hat{\pi}_{\mathcal{D}_{tr}}) \to \mathbb{E}[\text{Var} (h(\Gamma, \hat{\pi}_{\mathcal{D}_{tr}};x_0)|\hat{\pi}_{\mathcal{D}_{tr}})], \quad a.s.$$ \end{proof}

\begin{proof}[Proof of Theorem \ref{DE main}]

Note that $K=m'$. We have, by Assumptions \eqref{normal} and \eqref{normal batch}, that
$$\hat{\psi}_i(x_0)-h_{\theta^*}(x_0) = (\sqrt{K}\nu Y_i+B_0) (1+o_p(1)), $$
$$\bar{\psi}(x_0)-h_{\theta^*}(x_0)= (\sqrt{K} \nu \bar{Y}+B_0)(1+o_p(1)),$$
where $Y_i$ are i.i.d. $\mathcal{N}(0,1)$, $\bar{Y}=\frac{1}{K}\sum_{i=1}^K Y_i$ and $\nu=\sqrt{\sigma^2/n+\tau^2/m'}=\sqrt{\sigma^2/n+\tau^2/K}$. Then, we note some basic facts that $\bar Y\sim \frac{1}{\sqrt{K}}\mathcal N(0,1)$, and $\sum_{i=1}^K(Y_i-\bar Y)^2\sim\chi^2_{K-1}$, and these two variables are independent of each other. Therefore by the continuous mapping theorem, we have
$$\frac{S}{\sqrt{K}}=\sqrt{\frac{ \sum_{i=1}^K(\hat{\psi}_i(x_0)-\bar{\psi}(x_0))^2}{K(K-1)}} \Rightarrow\sqrt{\frac{K \sum_{i=1}^K(\nu Y_i-\nu \bar{Y})^2}{K(K-1)}}=\sqrt{\frac{\sum_{i=1}^K(\nu Y_i-\nu \bar{Y})^2}{K-1}},$$
and thus
% $$\frac{\bar\psi(x_0)-h_{\theta^*}(x_0)}{S/\sqrt K}\Rightarrow\frac{\sqrt{K}\nu\bar Y}{\sqrt{\frac{1}{K-1}\sum_{i=1}^K(\nu Y_i-\nu \bar Y)^2}}\stackrel{d}{=}N(0,1)/\sqrt{\frac{\chi^2_{K-1}}{K-1}}=t_{K-1},$$
% and
$$\frac{S^2/K}{\nu^2}\Rightarrow\frac{ \sum_{i=1}^K(\nu Y_i-\nu \bar Y)^2}{\nu^2(K-1)}=\frac{1}{K-1}\sum_{i=1}^K(Y_i-\bar Y)^2=\frac{\chi^2_{K-1}}{K-1}.$$
%To prove \eqref{sectioning}, we observe that 
%$$\psi(\hat P)-\bar\psi=\xi-\frac{1}{K}\sum_{i=1}^K\xi_i=o_p\left(\frac{1}{\sqrt l}\right)$$
%which concludes the result.
%The assertion \eqref{var est} follows from a similar derivation as above that
The CI \eqref{var CI DE} then follows immediately from the above result, since it implies
$$P\left(\frac{\chi^2_{K-1,\alpha/2}}{K-1}\leq\frac{S^2/K}{\nu^2}\leq\frac{\chi^2_{K-1,1-\alpha/2}}{K-1}\right)\to1-\alpha$$or equivalently
$$P\left(\frac{S^2/K}{\chi^2_{K-1,1-\alpha/2}/(K-1)}\leq\nu^2\leq\frac{S^2/K}{\chi^2_{K-1,\alpha/2}/(K-1)}\right)\to1-\alpha$$
which concludes the result.
\end{proof}

\section{Experiments: Details and More Results} \label{sec:expmore}

\subsection{Experimental Details} \label{sec:expmore1}
We provide more details about obtaining the ground-truth data variance and procedural variance separately. As we discussed in Section \ref{sec:exp}, we have employed that
$$\text{Var}(h(x_0)) \approx \frac{1}{J-1}\sum_{j=1}^J \left(h_j(x_0) - \frac{1}{J}\sum_{i=1}^J h_i(x_0)\right)^2:= \tilde{\text{Var}}(h(x_0))$$
where $J$ is the number of experimental repetitions (typically vary large). Note that we can allow $h(x_0)$ to be a single model predictor $h_{\hat{\theta}_1}(x_0)$ or a deep ensemble predictor $\bar{h}_{m'}(x_0)$. It follows from Theorem \ref{thm:var} in Section \ref{sec: DE} that
$$\mathbb{E}[ \text{Var}(h(\Gamma, \hat{\pi}_{\mathcal{D}_{tr}};x_0)|\hat{\pi}_{\mathcal{D}_{tr}})]=\frac{m'}{m'-1}\left(\text{Var}(h_{\hat{\theta}_1}(x_0))-\text{Var} (\bar{h}_{m'}(x_0))\right),
$$
$$\text{Var}(\mathbb{E}[h(\Gamma, \hat{\pi}_{\mathcal{D}_{tr}};x_0)|\hat{\pi}_{\mathcal{D}_{tr}}])=\frac{m'}{m'-1}\text{Var} (\bar{h}_{m'}(x_0))-\frac{1}{m'-1} \text{Var}(h_{\hat{\theta}_1}(x_0)).
$$
Hence, by replacing $\text{Var}(h_{\hat{\theta}_1}(x_0)), \text{Var} (\bar{h}_{m'}(x_0))$ with $\tilde{\text{Var}} (h_{\hat{\theta}_1}(x_0)), \tilde{\text{Var}} (\bar{h}_{m'}(x_0))$ respectively in the above equations, we obtain the ground truth data variance and procedural variance.

Throughout our experiments, we use a two-layer fully-connected NN with $1024$ hidden neurons as the base predictor based on the NTK specification in Section \ref{sec: NTK} and it is trained using the regularized square loss \eqref{equ:lossnn} with regularization hyper-parameter $\lambda=0.1^3$. The learning rate is properly tuned based on the specific dataset and it is small (around $10^{-2}$) to make the training procedure operate in the NTK regime \citep{jacot2018neural,du2019gradient,lee2019wide,zhang2020type}. All experiments are conducted on a single GeForce RTX 2080 Ti GPU.

\subsection{Computational Analysis} \label{sec:CA}
We further provide computational analysis for our approaches in the experiments. We report the execution time of the widely-used benchmark Boston dataset as a reference. The execution time of our approaches is approximately (unit: minutes): EV: 23.4 $|$ IF: 12.4 $|$ BA: 0.9. 
In addition, we illustrate the curve of EV values vs ensemble times in Figure \ref{fig:boston}. This shows that running 50 ensemble times is sufficient to produce robust results and running less ensemble times (to reduce the running time) such as 30 is also feasible. As shown, our approaches are very efficient and even have the potential for real-time applications.  

\begin{figure}
    \centering
    \includegraphics[width=0.5\textwidth]{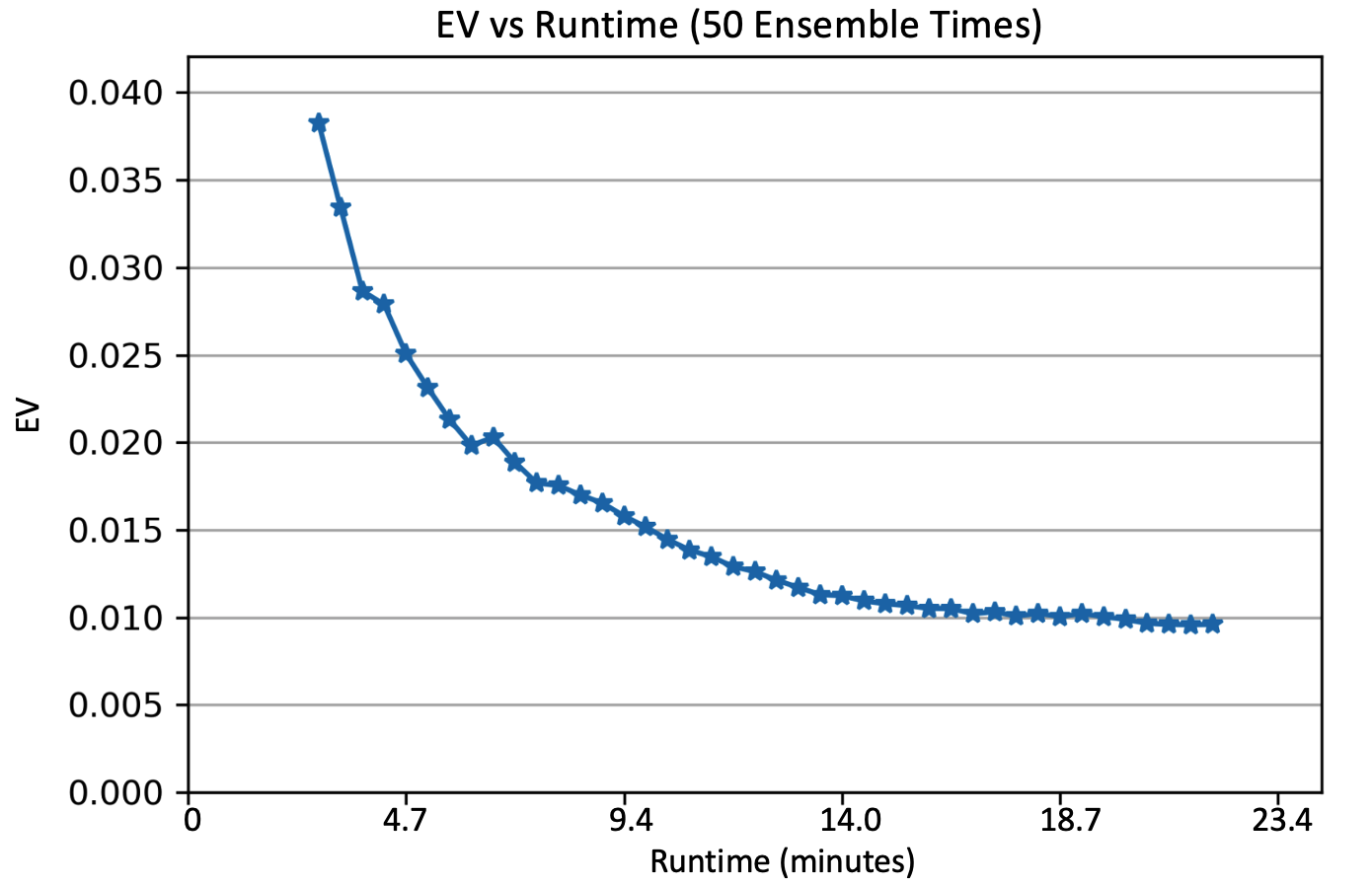}
    \caption{EV values vs ensemble times.}
    \label{fig:boston}
\end{figure}

\subsection{Additional Experiments} \label{sec:expmore2}
In this section, we present additional experimental results on more datasets.

Synthetic Datasets 2: $X \sim \mathcal{N}(0, 0.1^2\bm{I}_d)$ and 
$Y \sim \sum_{i=1}^d \exp(X^{(i)})+(X^{(i)})^2 + \mathcal{N}(0,(0.1d)^2)$.
The training set $\mathcal{D}_{tr}=\{(x_i,y_i): i=1,...,n\}$ are formed by i.i.d. samples of $(X,Y)$ with sample size $n=200$. We use $x_0=(0.1,0.1,...,0.1)$ as the fixed test point and set $K=m'=5$, $m=50$. The results are displayed in Table \ref{results2}. 

Synthetic Datasets 3: $X \sim \mathcal{N}(0, 0.1^2\bm{I}_d)$ and 
$Y \sim \sum_{i=1}^d \cos(X^{(i)})+(X^{(i)})^3 + \mathcal{N}(0,(0.1d)^2)$.
The training set $\mathcal{D}_{tr}=\{(x_i,y_i): i=1,...,n\}$ are formed by i.i.d. samples of $(X,Y)$ with sample size $n=500$. We use $x_0=(0.1,0.1,...,0.1)$ as the fixed test point and set $K=m'=5$, $m=50$. The results are displayed in Table \ref{results3}. 

%Synthetic Datasets 4: $X \sim \text{Unif}([0,0.2]^d)$ and $Y \sim \sum_{i=1}^d \sin(X^{(i)}) + \mathcal{N}(0,0.1^2)$. The training set $\mathcal{D}_{tr}=\{(x_i,y_i): i=1,...,n\}$ are formed by i.i.d. samples of $(X,Y)$ with sample size $n=200$. We use $x_0=(0.1,0.1,...,0.1)$ as the fixed test point and set $K=m'=5$, $m=50$. The results are displayed in Table \ref{results4}. 

%Synthetic Datasets 5: $X \sim \text{Unif}([0,0.2]^d)$ and $Y \sim \sum_{i=1}^d \sin(X^{(i)}) + \mathcal{N}(0,0.1^2)$. The training set $\mathcal{D}_{tr}=\{(x_i,y_i): i=1,...,n\}$ are formed by i.i.d. samples of $(X,Y)$ with sample size $n=500$. We use $x_0=(0.1,0.1,...,0.1)$ as the fixed test point and set $K=m'=5$, $m=50$. The results are displayed in Table \ref{results5}. 

%Synthetic Datasets 6: $X \sim \text{Unif}([0,0.2]^d)$ and $Y \sim \sum_{i=1}^d \sin(X^{(i)}) + \mathcal{N}(0,0.1^2)$. The training set $\mathcal{D}_{tr}=\{(x_i,y_i): i=1,...,n\}$ are formed by i.i.d. samples of $(X,Y)$ with sample size $n=1000$. We use $x_0=(0.1,0.1,...,0.1)$ as the fixed test point and set $K=m'=5$, $m=50$. The results are displayed in Table \ref{results6}. 

\begin{table}[ht] 
\small
  \centering
  \begin{tabular}{c|c|c|c}
    \toprule
      & $\tau^2$ GT &  $\tau^2$ EV & $\tau^2$ Diff\\
            \midrule
      $d=2$ & $0.1*10^{-3}$ &  $0.1*10^{-3}$ &  $-0.0*10^{-3}$\\
      $d=4$ & $1.7*10^{-3}$ &  $1.4*10^{-3}$ &  $-0.3*10^{-3}$\\
      $d=8$ & $1.8*10^{-2}$ &  $1.3*10^{-2}$ &  $-0.5*10^{-2}$\\
      $d=16$ & $1.4*10^{-1}$ &  $0.9*10^{-1}$ &  $-0.5*10^{-1}$\\
      \midrule
      &  $\frac{\sigma^2}{n}$ GT & $\frac{\sigma^2}{n}$ IF & $\frac{\sigma^2}{n}$ Diff \\ 
            \midrule
      $d=2$ & $1.1*10^{-3}$ & $1.5*10^{-3}$ & $+0.4*10^{-3}$\\
      $d=4$ & $8.1*10^{-3}$ & $8.7*10^{-3}$ & $+0.6*10^{-3}$\\
      $d=8$ & $4.8*10^{-2}$ & $5.0*10^{-2}$  & $+0.2*10^{-2}$\\
      $d=16$ & $3.0*10^{-1}$ & $3.4*10^{-1}$  & $+0.4*10^{-1}$\\    
      \midrule
      & $\frac{\sigma^2}{n}+\frac{\tau^2}{m'}$ GT & $\frac{\sigma^2}{n}+\frac{\tau^2}{m'}$ BA & $\frac{\sigma^2}{n}+\frac{\tau^2}{m'}$ Diff\\
            \midrule
      $d=2$ & $1.1*10^{-3}$ & $1.5*10^{-3}$ & $+0.4*10^{-3}$ \\
      $d=4$ & $8.4*10^{-3}$ & $8.6*10^{-3}$ & $+0.2*10^{-3}$ \\
      $d=8$ & $5.2*10^{-2}$ & $5.6*10^{-2}$  & $+0.4*10^{-2}$\\
      $d=16$ & $3.3*10^{-1}$ & $3.6*10^{-1}$  & $+0.3*10^{-1}$\\
    \bottomrule
  \end{tabular}
  \caption{Epistemic Variance Estimation on Synthetic Datasets 2 with Different Dimensions.}
 \label{results2}  
\end{table}

\begin{table}[ht] 
\small
  \centering
  \begin{tabular}{c|c|c|c}
    \toprule
      & $\tau^2$ GT &  $\tau^2$ EV & $\tau^2$ Diff\\
            \midrule
      $d=2$ & $4.5*10^{-4}$ &  $4.3*10^{-4}$ &  $-0.2*10^{-4}$\\
      $d=4$ & $1.1*10^{-3}$ &  $0.8*10^{-3}$ &  $-0.3*10^{-3}$\\
      $d=8$ & $1.0*10^{-2}$ &  $0.7*10^{-2}$ &  $-0.3*10^{-2}$\\
      $d=16$ & $0.4*10^{-1}$ &  $0.2*10^{-1}$ &  $-0.2*10^{-1}$\\
      \midrule
      &  $\frac{\sigma^2}{n}$ GT & $\frac{\sigma^2}{n}$ IF & $\frac{\sigma^2}{n}$ Diff \\ 
            \midrule
      $d=2$ & $5.7*10^{-4}$ & $6.3*10^{-4}$ & $+0.6*10^{-4}$\\
      $d=4$ & $3.9*10^{-3}$ & $4.5*10^{-3}$ & $+0.6*10^{-3}$\\
      $d=8$ & $2.1*10^{-2}$ & $2.4*10^{-2}$  & $+0.3*10^{-2}$\\
      $d=16$ & $0.6*10^{-1}$ & $0.8*10^{-1}$  & $+0.2*10^{-1}$\\    
      \midrule
      & $\frac{\sigma^2}{n}+\frac{\tau^2}{m'}$ GT & $\frac{\sigma^2}{n}+\frac{\tau^2}{m'}$ BA & $\frac{\sigma^2}{n}+\frac{\tau^2}{m'}$ Diff\\
            \midrule
      $d=2$ & $6.6*10^{-4}$ & $6.9*10^{-4}$ & $+0.3*10^{-4}$ \\
      $d=4$ & $4.1*10^{-3}$ & $4.3*10^{-3}$ & $+0.2*10^{-3}$ \\
      $d=8$ & $2.3*10^{-2}$ & $2.5*10^{-2}$  & $+0.2*10^{-2}$\\
      $d=16$ & $0.7*10^{-1}$ & $0.8*10^{-1}$  & $+0.1*10^{-1}$\\
    \bottomrule
  \end{tabular}
  \caption{Epistemic Variance Estimation on Synthetic Datasets 3 with Different Dimensions.}
 \label{results3}  
\end{table}

%\section{Detailed Guidance on Approach Selection}\label{secE}

%Hence, we recommend the use of influence function approach on small datasets as the Gram matrix inversion will not be computationally expensive, while for large datasets, we suggest batching so that the estimation will not be degraded by the dataset dividing.

\end{appendices}

\end{document}